\newtheorem{theorem}{Theorem}[section]
\def\eqref#1{equation~\ref{#1}}
\def\1{\bm{1}}
\def\rmA{{A}}
\def\rmX{{X}}
\def\rmY{{Y}}
\def\ermA{{\textnormal{A}}}
\def\ermX{{\textnormal{X}}}
\def\ermY{{\textnormal{Y}}}
\DeclareMathAlphabet{\mathsfit}{\encodingdefault}{\sfdefault}{m}{sl}
\SetMathAlphabet{\mathsfit}{bold}{\encodingdefault}{\sfdefault}{bx}{n}
\def\PS{{\mathbb{P}^{S}}}
\def\PT{{\mathbb{P}^{T}}}
\def\Xcal{{\mathcal{X}}}
\def\Ycal{{\mathcal{Y}}}
\def\preds{{\textnormal{$\hat{\ermY}$}}}
\def\Acal{{\mathcal{A}}}
\newcommand{\FUNC}{{\mathcal{F}}}
\newcommand{\funcnew}{{\mathsf{F}}}
\newcommand{\ie}{\textnormal{i.e.}}
\def\TR{{\mathcal{D}^{S}}}
\def\TE{{\mathcal{D}^{T}}}
\def\ERS{{\widehat{\mathsf{ER}}^S}}
\def\ERT{{\widehat{\mathsf{ER}}^T}}
\def\TRS{{\mathcal{R}^S}}
\def\TRT{{\mathcal{R}^T}}
\def\L{{\mathcal{L}}}
\def\Wass{{\mathcal{W}}}
\def\Metric{{\mathcal{M}}}
\def\EOdds{{\Delta_{\mathrm{EOdds}}}}
\def\Apar{{\Delta_{\mathrm{Apar}}}}
\theoremstyle{plain}
\theoremstyle{definition}
\newtheorem{definition}[theorem]{Definition}
\newtheorem{assumption}[theorem]{Assumption}
\theoremstyle{remark}
\title{Fairness under Covariate Shift:
\\ Improving Fairness-Accuracy tradeoff with few Unlabeled Test Samples}
\author{%
Shreyas Havaldar \thanks{Equal Contribution} \\ Google Research India \\
  \texttt{shreyasjh@google.com} \\ \And Jatin Chauhan \footnotemark[1] \textsuperscript{    ,} \thanks{Contributions to this work were made when affiliated with Google Research India} \\ UCLA \\
  \texttt{chauhanjatin100@gmail.com} \\ \And Karthikeyan Shanmugam \footnotemark[1] \\ Google Research India \\
  \texttt{karthikeyanvs@google.com} \\ \And Jay Nandy \footnotemark[2] \\ Fujitsu Research India \\
  \texttt{jayjaynandy@gmail.com} \\ \And Aravindan Raghuveer \\ Google Research India \\
  \texttt{araghuveer@google.com}
}
\begin{document}

\maketitle

\begin{abstract}
Covariate shift in the test data is a common practical phenomena that can significantly downgrade both the accuracy and the fairness performance of the model. Ensuring fairness across different sensitive groups under covariate shift is of paramount importance due to societal implications like criminal justice. We operate in the unsupervised regime where only a small set of unlabeled test samples along with a labeled training set is available. Towards improving fairness under this highly challenging yet realistic scenario, we make three contributions. First is a novel composite weighted entropy based objective for prediction accuracy which is optimized along with a representation matching loss for fairness. We experimentally verify that optimizing with our loss formulation outperforms a number of state-of-the-art baselines in the pareto sense with respect to the fairness-accuracy tradeoff on several standard datasets. Our second contribution is a new setting we term Asymmetric Covariate Shift that, to the best of our knowledge, has not been studied before. Asymmetric covariate shift occurs when distribution of covariates of one group shifts significantly compared to the other groups and this happens when a dominant group is over-represented. While this setting is extremely challenging for current baselines, We show that our proposed method significantly outperforms them. Our third contribution is theoretical, where we show that our weighted entropy term along with prediction loss on the training set approximates test loss under covariate shift. Empirically and through formal sample complexity bounds, we show that this approximation to the unseen test loss does not depend on importance sampling variance which affects many other baselines.

\end{abstract}

\section{Introduction}

Predictions of machine learnt models are used to make important decisions that have societal impact, like in criminal justice, loan approvals, to name a few.
Therefore, there is a lot of interest in understanding,  analyzing and improving  model performance along other dimensions like robustness~\citep{silva2020opportunities}, model generalization~\citep{wiles2021fine} and fairness~\citep{oneto2020fairness}. 
In this work, we focus on the algorithmic fairness aspect.  Datasets used for training could be biased in the sense that some groups may be under-represented, thus biasing classifier decisions towards the over-represented group or the bias could be in terms of undesirable causal pathways between sensitive attribute and the label in the real world data generating mechanism~\citep{oneto2020fairness}. It has often been observed~\citep{bolukbasi2016man}, ~\citep{buolamwini2018gender} that algorithms that optimize predictive accuracy that are fed pre-existing biases further learn and then propagate the same biases.
Improving fairness of learning models has received significant attention from the research community \cite{mitchell2021algorithmic}.

Another common challenge that models deployed in real world situations face is that of {\em Covariate Shift.}
In covariate shift,  the distribution of covariates (feature vectors) across training and testing changes, however the optimal label predictor conditioned on input remains the same. Therefore the model may make wrong predictions when deployed or more seriously can slowly degrade over time when the covariate shift is gradual. Due to the practical importance of this problem, there has been a significant amount of research in detecting covariate shift and modeling methodologies to address it \cite{wilson2020survey,redko2020survey}.

The problem that we study in this paper is at the juncture of the above two hard problems: ensuring fairness under covariate shift.   While this question has not received much attention, some recent works like \cite{rezaei2021robust} have begun to address this problem. We also introduce a new variant of covariate shift called {\em Asymmetric covariate shift}  where  distribution of covariates of one group shifts significantly as compared to the other groups. Asymmetric covariate shift is a very common practical situation when there is long tail of underrepresented groups in the training data. For example, consider the popular and important task of click through prediction of advertisements~\cite{li2015click}.  Small and medium sized advertisers have poorer representation in the training data because they do not spend as much as the large businesses on  advertising. Therefore during inference Small and Medium (SMB) advertisement clicks will see significantly more co-variate shift compared to those clicks on ads from large advertisers.
Also, due to the nature of the problem of covariate shift, access to large labeled test is often not possible. In summary, the problem we aim to tackle is 
"Provide a high fairness-accuracy tradeoff under both symmetric and asymmetric covariate shift while having acesss to a very small set of unlabeled test samples". To this end, we make three key contributions in this paper.

 \noindent
    {1. We introduce a composite objective to approximate the prediction loss on the unlabeled test that involves \textit{a novel weighted entropy objective on the set of unlabeled test samples} along with ERM objective on the labeled training samples. We optimize these weights using \textit{min-max} optimization that implicitly drives these weights to importance sampling ratios with no density estimation steps. We show that our proposed objective has \textit{provably} lower variance compared to the importance sampling based methods. This composite objective is then combined with a representation matching loss to train fair classifiers. (Section~\ref{sec:method}). %
    }
    
\noindent {2. We introduce a new type of covariate shift called \textit{asymmetric covariate shift} wherein one protected group exhibits large covariate shift while the other does not. We highlight that fairness-accuracy tradeoff degrades under this case for existing methods (Section~\ref{sec:Existing-Methods}). We show empirically that the combination of our objective and representation matching achieves the best accuracy fariness-tradeoff even in this case.  %
}

\noindent {3. By incorporating our proposed weighted entropy objective with the Wasserstein based representation matching across sub-groups, we empirically compare against a number of baseline methods on benchmark datasets. In particular, we achieve the best accuracy-equalized odds tradeoff in the \textit{pareto sense}}. %

\section{Related Work}
\label{sec:related_work}

\textbf{Techniques for imposing fairness:} \textit{Pre-processing} techniques aim to transform the dataset \citep{calmon2017optimized, swersky2013learning,
kamiran2012data} followed by a standard training. \textit{In-processing} methods directly modify the learning algorithms using techniques, such as, adversarial learning \citep{madras2018learning, 10.1145/3278721.3278779}, \citep{agarwal2018reductions, JMLR:v20:18-616,donini2018empirical, fish2016confidence, zafar2017fairness, celis2019classification}. \textit{Post-processing} approaches, primarily focus on modifying the outcomes of the predictive models in order to make unbiased predictions \citep{pleiss2017fairness, 
hardt2016equality}

\textbf{Distribution Shift:} Research addressing distribution shift in machine learning is vast and is growing. The general case considers a joint distribution shift between training and testing data \citep{ben2006analysis, blitzer2007learning, moreno2012unifying} resulting in techniques like domain adaptation \citep{ganin2015unsupervised}, distributionally robust optimization \citep{sagawa2019distributionally,duchi2021learning} and invariant risk minimization and its variants \citep{arjovsky2019invariant,
shi2021gradient}. A survey of various methods and their relative performance is discussed by \cite{wiles2021fine}. We focus on the problem of \textit{Covariate Shift} where the \textit{Conditional Label} distribution is invariant while there is a shift in the marginal distribution of the covariates across training and test samples. %
This classical setup is studied by \cite{shimodaira2000improving, sugiyama2007covariate, gretton2009covariate}.
\textit{Importance Weighting} is one of the prominently used techniques for tackling covariate shifts \citep{sugiyama2007covariate,https://doi.org/10.48550/arxiv.1910.06324}. 
However, they are known to have high variance under minor shift scenarios \citep{NIPS2010_59c33016}. Recently methods that  emerged as the de-facto approaches to tackle distribution shifts include popular entropy minimization \citep{wang2021tent}, pseudo-labeling \citep{adaptation2_2017,adaptation3_cvpr_2020}, batch normalization adaptation \citep{NEURIPS2020_85690f81,adaptBN_cp_arxiv_2020}, because of their wide applicability and superior performance. Our work provides a connection between a version of weighted entropy minimization and traditional importance sampling based loss which may be of independent interest. 
\begin{table*}[t]
\centering
\resizebox{1\textwidth}{!}{%
\begin{tabular}{|l|l|l|l|}
\hline 
Method        & Metrics Optimized For & Labels Required & Method Description \\ \hline
Adv-Deb. ~\citep{10.1145/3278721.3278779} & Eq. Odds              & Yes                   & Representation Matching (across data subgroups grouped by (A,Y))                           \\ \hline
Adv-Deb. ~\citep{10.1145/3278721.3278779} & Demographic Parity             & No                   & Representation Matching (across data subgroups grouped by only A)                           \\ \hline
FairFictPlay ~\cite{kearns2018preventing}        & False Positive Rate Parity              & Yes                   & Minimax Game between learner and auditor                           \\ \hline
LFR ~\citep{swersky2013learning}          & Demographic Parity     & No                    & Representation Matching                          \\ \hline
RSF*  ~\citep{rezaei2021robust}         & Eq. Odds              & No                   & Minimax Game between predictor and test distribution approximator                           \\ \hline
Massaging \cite{kamiran2012data}    & Demographic Parity                     & Yes                    &            Changing class labels to remove discrimination               \\ \hline
Suppression \cite{kamiran2012data}           & Demographic Parity                     & Yes                    & Remove sensitive attributes along with other highly correlated ones                          \\ \hline
Reweighing \cite{kamiran2012data}           & Demographic Parity                     & Yes                    & Tuples in training dataset are assigned weights                          \\ \hline
Sampling \cite{kamiran2012data}           & Demographic Parity                     & Yes                    & Non-uniform sampling of groups via duplication and removal                         \\ \hline
RF*   ~\citep{NEURIPS2020_d6539d3b}         & Demographic Parity                     &    No                & Weighted Combination of Dataset via 2 Player Game                         \\ \hline
Opt. Pre-Processing ~\citep{calmon2017optimized}         & Demographic Parity                     &    Yes                & Data Transformation                       \\ \hline
Certification and Combinatorial Repairing   ~\citep{feldman2015certifying}         & Disparate Impact                     &    Yes                & Creating a new distribution via linear interpolation in rank space                      \\ \hline
Certification and Geometric Repairing   ~\citep{feldman2015certifying}         & Disparate Impact                     &    Yes                & Creating a new distribution via linear interpolation in original dataspace                        \\ \hline
LAFTR ~\citep{madras2018learning}        & Demographic Parity                    &    No                & Adversarial Representation Matching                     \\ \hline
LAFTR ~\citep{madras2018learning}        & Eq. Odds                    &    Yes                & Adversarial Representation Matching                     \\ \hline
LAFTR ~\citep{madras2018learning}        & Eq. Opportunity                    &    Yes                & Adversarial Representation Matching                     \\ \hline
EGR ~\citep{agarwal2018reductions}        & Demographic Parity                   &    No                &    Exp. Gradient Reduction               \\ \hline
EGR ~\citep{agarwal2018reductions}        & Eq. Odds                   &    Yes                &    Exp. Gradient Reduction                 \\ \hline

Post Processing ~\citep{hardt2016equality}        & Eq. Odds                   &    Yes                &    Modifying an existing predictor based on A and Y.              \\ \hline
\end{tabular}%
}
\caption{Representative Collection of Fairness Methods. * represents methods that tackle covariate shift that we compare with}
\label{tab:methods}
\end{table*}

\textbf{Fairness under Distribution shift:}  The work by~\cite{rezaei2021robust} is by far the most aligned to ours as they propose a method that is robust to covariate shift while ensuring fairness when unlabeled test data is available.
However, this requires the density estimation of training and test distribution that is not efficient at higher dimensions and small number of test samples. In contrast our method avoids density estimation and uses a weighted version of entropy minimization that is constrained suitably to reflect importance sampling ratios implicitly. \cite{NEURIPS2020_d6539d3b} proposed a method for fair classification under the worst-case weighting of the data via an iterative procedure, but it is in the agnostic setting where test data is not available. \cite{singh2021fairness} studied fairness under shifts through a causal lens but the method requires access to the causal graph, separating sets and other non-trivial data priors. \cite{zhang2021farf} proposed FARF, an adaptive method for learning in an online setting under fairness constraints, but is clearly different from the static shift setting considered in our work. \cite{slack2020fairness} proposed a MAML based algorithm to learn under fairness constraints, but it requires access to labeled test data. \cite{https://doi.org/10.48550/arxiv.2206.12796} propose a consistency regularization technique to ensure fairness under subpopulation and domain shifts under a specific model, while we consider covariate shift.

\section{Problem Setup}
\label{sec:problem_setup}
Let $\Xcal \subseteq \mathcal{R}^d$  be the $d$ dimensional feature space for covariates, $\Acal$ be the space of categorical \textit{group} attributes and $\Ycal$ be the space of class labels. In this work, we consider $\Acal = \{0, 1\}$ and $\Ycal = \{0, 1\}$. Let $\rmX \in \Xcal, \ermA \in \Acal, ~\ermY \in \Ycal$ be realizations from the space. We consider a training dataset $\TR = \{(\rmX_i,\ermA_i,\ermY_i) | i \in [n]\}$ where every tuple $(\rmX_i,\ermA_i,\ermY_i) \in \Xcal \times \Acal \times \Ycal$. We also have an \textit{unlabeled} test dataset, $\TE = \{\rmX_i,\ermA_i | i \in [m]\}$. We focus on the setup where $m << n$.  The training samples $(\rmX_i,\rmA_i,\rmY_i \in \TR)$ are sampled i.i.d from distribution $\PS(\rmX,\ermY,\ermA)$ while the unlabeled test instances are sampled from $\PT(\rmX,\ermA)$. 

Let $\FUNC: \Xcal \rightarrow [0,1]$ be the space of soft prediction models. In this work, we will consider $\funcnew \in \FUNC$ of the form $\funcnew= h \circ g$ where $g(\rmX) \in \mathbb{R}^{k}$ (for some dimension $k>0$), is a representation that is being learnt while $h(g(\rmX)) \in [0,1]$ provides the soft prediction. Note that we don't consider $\ermA$ as an input to $\funcnew$, as explained in the work of  ~\citep{https://doi.org/10.48550/arxiv.2106.08812}. 
$\funcnew$ is assumed to be parametrized via $\theta$. Instead of representing the network as $\funcnew_{\theta}$, we drop the subscript and simply use $\funcnew$ when its clear from the context. The class prediction probabilities from $\funcnew$ are denoted with $P(\preds=y|\rmX_i)$, where $y \in \{0, 1\}$.

The supervised in-distribution training of $\funcnew$ is done by minimizing the \emph{empirical risk}, $\ERS$ as the proxy for \emph{population risk}, $\TRS$. Both risk measures are computed using the \textit{Cross Entropy (CE)} loss for classification (correspondingly we use $\ERT$ and $\TRT$ over the \textit{test distribution} for $\funcnew$). %
\begin{align}
 & \TRS = \mathbb{E}_{\PS(\rmX,\ermA,\rmY)} \left( -\log P(\preds=\rmY|\rmX) \right), \nonumber  \\ &\ERS = \frac{1}{n} \sum_{(\rmX_i,\rmY_i,A_i) \in \TR} \left(- \log P(\preds=\rmY_i|\rmX_i) \right)
 \label{eq:train_risk}
\end{align}

\subsection{Covariate Shift Assumption}
\label{subsec:covar_shift}
For our work, we adopt the \textit{covariate shift} assumption as in \cite{shimodaira2000improving}. Covariate shift assumption implies that $\PS(\ermY|\rmX, \ermA) = \PT(\ermY|\rmX, \ermA)$. In other words, shift in distribution only affects the joint distribution of covariates and sensitive attribute, i.e. $\PS(\rmX, \ermA) \neq \PT(\rmX, \ermA)$. We note that our setup is identical to a recent work of fairness under covariate shift by \cite{rezaei2021robust}. We also define and focus on a special case of covariate shift called \hbox{\textit{asymmetric covariate shift}}.

\begin{definition}[Asymmetric Covariate Shift]\label{defn:asymmcov_shift}
 Asymmetric covariate shift occurs when distribution of covariates of one group shifts while the other does not, i.e. $\PT(\rmX | \ermA =1 ) \neq \PS(\rmX | \ermA =1 ) $ while $\PT(\rmX | \ermA = 0 ) = \PS(\rmX | \ermA =0 ) $ in addition to $\PS(\ermY|\rmX, \ermA) = \PT(\ermY|\rmX, \ermA)$
\end{definition}

\begin{figure}[t]
    \centering
    \begin{subfigure}[b]{\columnwidth}
        \centering
        \includegraphics[width=0.42\textwidth]{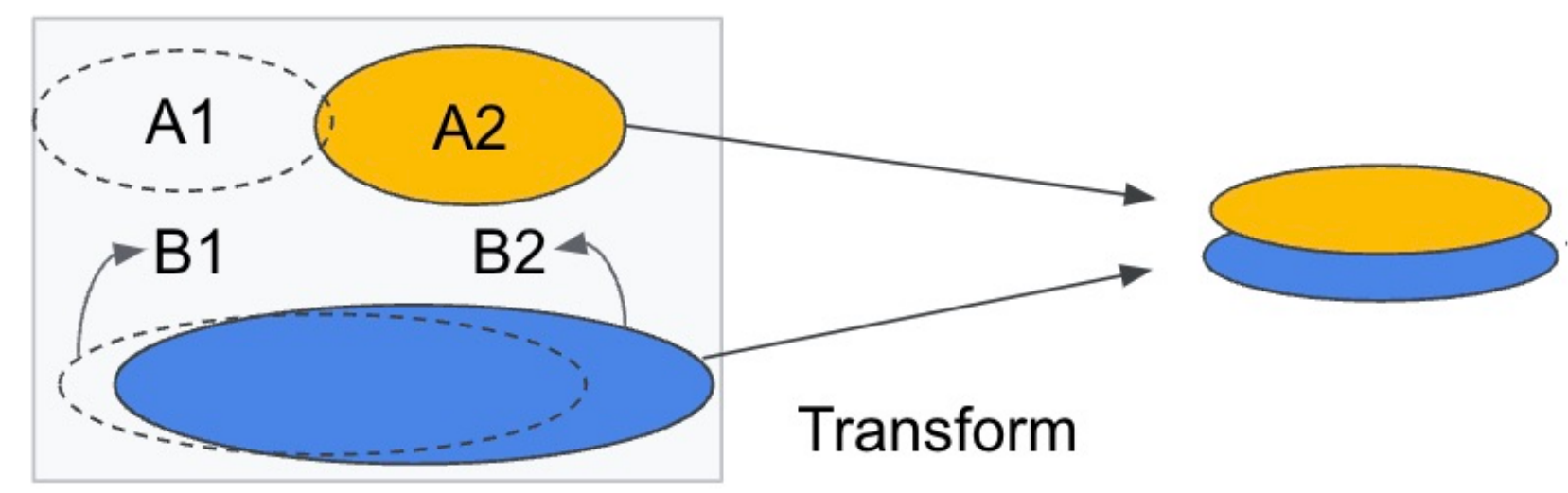}
        \caption{Figure shows how feature matching transformation affects domain of features for the two groups (yellow and the blue). The test features $B2,A2$ of both groups are made to overlap in the transformed space. }
        \label{fig:setup-1}
    \end{subfigure}
    \hfill
    \begin{subfigure}[b]{\columnwidth}
        \centering
        \includegraphics[width=0.42\textwidth]{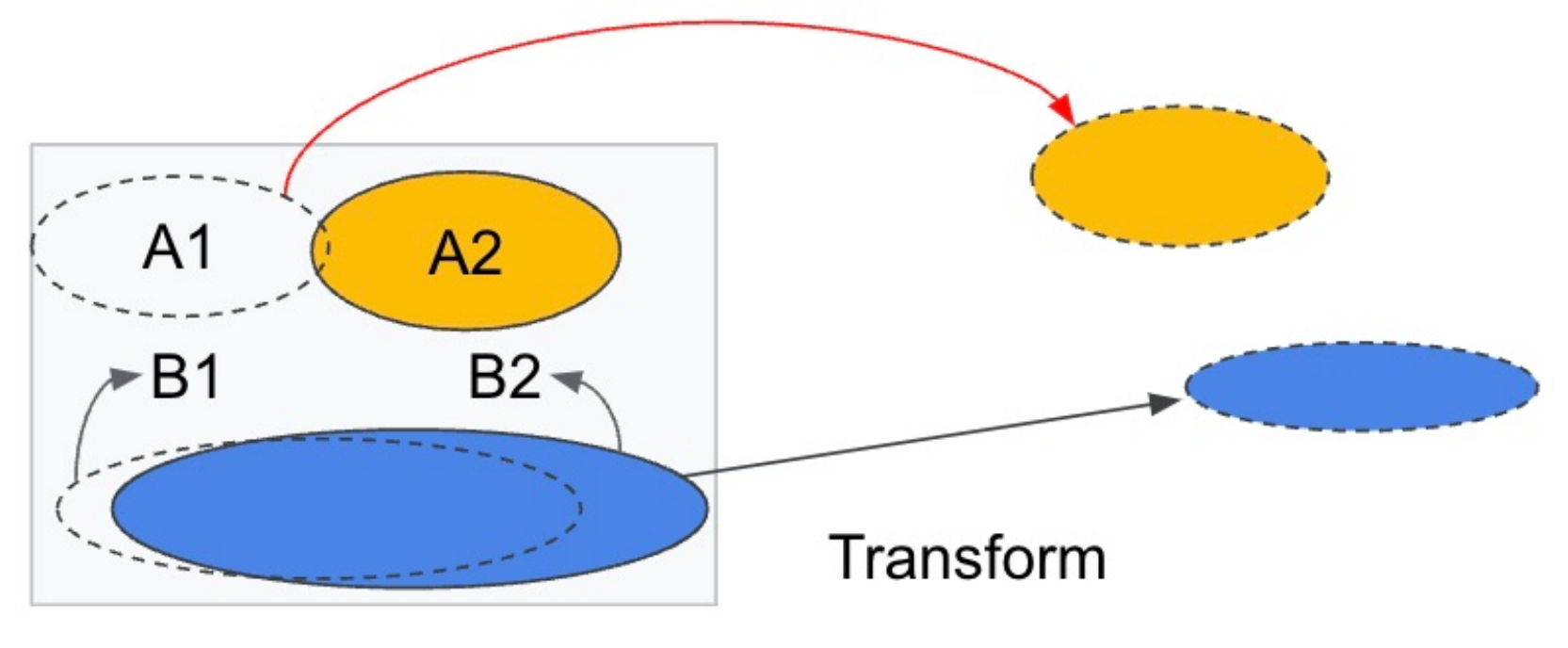}
        \caption{This shows how training features $A1,B1$ misalign in the transformed space when the same transformation in Figure \ref{fig:setup-1} is applied. Since $B1$ and $B2$ shift very little, in the transformed space, in the test region only the labeled examples of blue group is found. }
        \label{fig:setup-2}
     \end{subfigure}
     \caption{Asymmetric Shift Illustrated}
     \label{fig:setup-both}
\end{figure}

This type of covariate shift occurs when a sub-group is over represented (sufficiently capturing all parts of the domain of interest in the training data) while the other sub-group being under represented and observed only in one part of the domain. In the test distribution, covariates of the under-represented group assume a more drastic shift.

\subsection{Fairness Regularization with no Labels}
\label{sec:motivation}

We observe that the central issue in our problem is the learner has access only to an unlabelled test set ${\cal D}^T$ and one wants some fairness criterion to be enforced on it. Popular fairness metrics like \textit{Equalized Odds} (that forces $I(A; \hat{Y} | Y)$ to zero) and \textit{Accuracy Parity} (that forces $I(Y \neq \hat{Y};A)=0$)  depend on the training labels $Y$ to even evaluate. %

In Table \ref{tab:methods}, we outline fairness metrics optimized, methods used and popular methods in the fairness literature. We emphasise that this is by no means an exhaustive survey of fairness methods and only a representative one. Excluding methods that address accuracy-fairness  tradeoff under covariate shift (we compare against these baselines empirically in our work), we observe that only methods that use representation matching of covariates across sensitive groups can work to impose any fairness measure at all without requiring any label information. We also point out that representation matching can also help ensure accuracy parity under some conditions \cite{zhao2019inherent} without requiring labels. Therefore, we adopt representation matching loss across sensitive groups applied on ${\cal D}^T$ to be our fairness regularizer in this work.

Formally, we seek to train a classifier $\funcnew_\theta=h_\theta \circ g_\theta (\rmX)$ by matching representation $g(\rmX)$ across the protected sub groups and learning a classifier on top of that representation \citep{zhao2019inherent}. 
 Several variants for representation matching loss have been proposed in the literature ~\citep{pmlr-v115-jiang20a, https://doi.org/10.48550/arxiv.2103.06828,https://doi.org/10.48550/arxiv.2106.08812, NEURIPS2020_51cdbd26}. For implementation ease, we pick Wasserstein-2 metric to impose representation matching. We recall the definition of Wasserstein distance:
 \begin{definition}
		Let $(\Metric,d)$ be a metric space and $P_p(\Metric)$ denote the collection of all probability measures $\mu$ on $\Metric$ with finite $p^{th}$ moment. %
		Then the $p$-th Wasserstein distance between measures $\mu$ and $\nu$ both $\in P_p(\Metric)$ is given by:
	$\Wass_p(\mu,\nu) = \left(  \inf_\gamma \int_{\Metric \times \Metric} d(x,y)^{p} d\gamma(x,y) \right)^{\frac{1}{p}}$;
		$\gamma \in \Gamma(\mu,\nu)$, where $\Gamma(\mu,\nu) $ %
		denotes the collection of all measures on $\Metric \times \Metric$ with marginals $\mu$ and $\nu$ respectively.
\end{definition}

We minimize the $\Wass_2$ between the representation $g(\cdot)$ of the test samples from both groups. %
Empirically, our representation matching loss is given by:
 $   {\hat{\L}_{Wass}}(\TE) = \Wass_p(\hat{\mu},\hat{\nu}), \hat{\mu} = \frac{\sum_{(\ermX_i,\rmA_i=0)\in \TE} \delta_{g(\rmX_i)}}{|{(\ermX_i,\rmA_i=0)\in \TE}|} , \hat{\nu} = \frac{\sum_{(\ermX_i,\rmA_i=1)\in \TE} \delta_{g(\rmX_i)}}{|{(\ermX_i,\rmA_i=1)\in \TE}|} 
 $

We arrive at the following objective which is of central interest in the paper:
\begin{align}\label{eq:obj_interest}
\min \limits_{F_{\theta} = h_\theta \circ g_\theta } \ERT + \lambda {\hat{\L}_{Wass}}(\TE) 
\end{align}

\subsection{Issues with Applying Existing Techniques}\label{sec:Existing-Methods}
\textbf{Representation Matching:}
Since we don't have labels for the test set one cannot implement the first term in (\ref{eq:obj_interest}). It is natural to optimize the following objective: $\ERS + \lambda {\hat{\L}_{Wass}}(\TE) $ where the first term optimizes prediction accuracy on labeled training data while the second term matches representation across groups in the unlabeled test. We illustrate that under asymmetric covariate shift, this above objective is ineffective. The issue is illustrated best through Figures \ref{fig:setup-1} and \ref{fig:setup-2}. $A1$ and $B1$ represent group $1$ and $0$ feature distributions in the training set. Under asymmetric covariate shift, $B2 \approx B1$ while group $1$ shifts drastically to $A2$. Now, representation matching loss on the test would map $A2$ and $B2$ to the same region in the range space of $g(\cdot)$ as in Fig. \ref{fig:setup-1}. However, the classifier $h$ would be exclusively trained on samples from group $B$ (i.e. $g(B1)$) although both $A2$ and $B2$ overlap there as shown in Fig. \ref{fig:setup-2}. Training predictors on training samples but representation matching under test creates this issue as discussed in detail in appendix.
This highlights also the central issue we tackle in this work. Adversarial debiasing \cite{10.1145/3278721.3278779} is another method that does representation matching which we discuss below.

\textbf{Under-Performance of Adversarial Debiasing under Covariate Shift}
\label{sec:ecs-adult}
We study the variation of the performance of a State of The Art Method, namely Adversarial Debiasing~\citep{10.1145/3278721.3278779} against the magnitude of shift $\gamma$ on the Adult dataset. The variation of the error \% is plotted in the left subfigure of Figure~\ref{fig:ecs-adult}, and the variation of Equalized Odds is present in the right subfigure.

As we move from a setting of No Covariate Shift ($\gamma=0$), to Medium Covariate Shift ($\gamma = 10$), to Extreme Covariate Shift ($\gamma = 20$), there is a significant deterioration in the performance of the method as both Error \% and Equalized Odds increase with the increase in the magnitude of the shift. This reinforces the belief that state of the art fairness techniques do not extend well to settings under covariate shift.
In figure~\ref{fig:ecs-adult}, we complement these claims by analyzing the \textit{under-performance} for a state-of-the-art fairness method - Adversarial Debiasing~\citep{10.1145/3278721.3278779}. We also see similar drop in performance under covariate shift in other baselines we consider, which we have highlighted in our experimental analysis.

\begin{figure}[t]
    \centering
    \includegraphics[width=0.9\columnwidth]{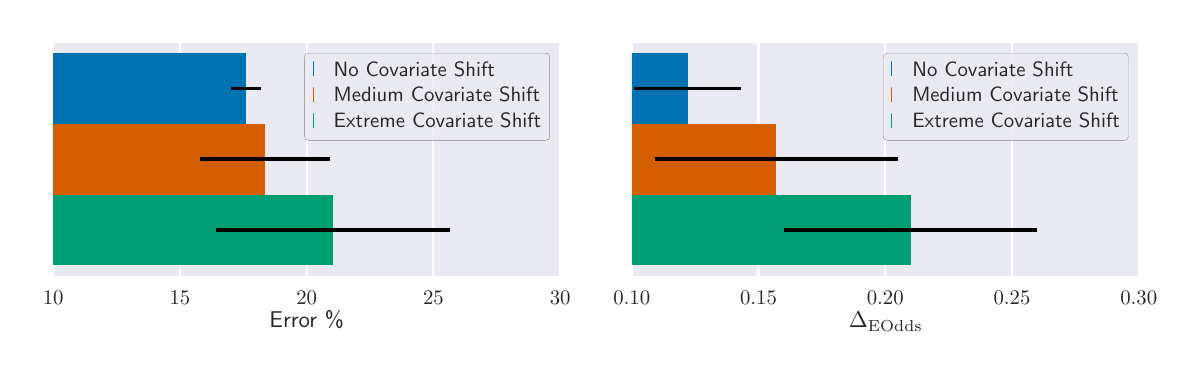}

    \caption{Both Error (in \% left) and Equalized Odds (right) for SOTA fairness method - Adversarial Debiasing exhibit strong degradation on increasing the magnitude of covariate shift. Three scenarios corresponding to no shift, intermediate shift and high shift are plotted (details on shift construction are explained in the experimental section). }

    \label{fig:ecs-adult}
\end{figure}

\textbf{Distributional Robustness Methods:} Another option to implement (\ref{eq:obj_interest}) would be to use a distributional robust learner (DRO) on the source distribution simultaneously with the representation matching penalty for the target. We consider a very recent SOTA method RGD-Exp \cite{kumar2023stochastic} that implements a form of DRO. We effectively replace $\ERT$ from eqn (\ref{eq:obj_interest}) with a robust loss term from the paper and perform the same optimization as us and notice that it does not achieve as good a accuracy-fairness tradeoff as our algorithm, thus establishing that trivially combining a SOTA distributionally robust method with Wasserstein Loss (2nd term from eqn. \ref{eq:obj_interest}: ${\hat{\L}_{Wass}}(\TE))$ does not suffice to achieve fairness under shift and something more nuanced is required. 

\textbf{Importance Sampling/Density Ratio Estimation based methods:} Another way to implement (\ref{eq:obj_interest}) is to use importance sampled prediction loss on training samples to mimic the test loss (first term) in (\ref{eq:obj_interest}). For this, one estimates ratio between training and test density directly using KLIEP/LSIF losses (\cite{sugiyama2007covariate,JMLR:v10:kanamori09a}) or perform density estimation which does not scale in higher dimensions. Sample complexity of these techniques directly scales with importance sampling variance which is large with very few test samples. We show this via formal sample complexity bounds in Section \ref{sec:theory} and empirically in 
figure \ref{fig:asymmetric_fig_results_1} and subsequent figures in appendix,
where we see large variances in accuracy for these methods. Robust Shift Fair Method of \cite{rezaei2021robust} also involves density estimation steps which suffer from the same issue.

\section{Method and Algorithm}
\label{sec:method}

Recall that the objective we are interested in is (\ref{eq:obj_interest}). One needs a proxy for the first term ($\ERT$) due to lack of labels. From considerations in the previous section, training has to be done in a manner that exploits training labels from source dataset effectively but can tackle covariate shift despite using representation matching.
We derive a novel objective in Theorem \ref{theorem:th_1} based on the weighted entropy over instances in $\TE$ along with empirical loss over $\TR$ and show that is an upper bound to $\TRT$. %

\begin{theorem}
\label{theorem:th_1}
Suppose that $\PT(\cdot)$ and $\PS(\cdot)$ are absolutely continuous with respect to each other over domain $\Xcal$. Let $\epsilon \in \mathbb{R}^{+}$ be such that $ \frac{\PT(\ermY=y|\rmX)}{P(\preds=y|\rmX)}  \leq \epsilon$, for $y \in \{0,1\}$ almost surely with respect to distribution $\PT(\rmX)$. %
Then, we can upper bound $\TRT$ using $\TRS$ along with an unsupervised objective over $\PT$ as:
\begin{align}
\label{eq:main_bound}
    \TRT \leq \TRS + \epsilon \times \mathbb{E}_{\PT(\rmX)} \left[ e^{\left(- \frac{\PS(\rmX)}{\PT(\rmX)}\right)} \mathcal{H(\preds|\rmX)} \right] %
\end{align}
where $\mathcal{H(\preds|\rmX)} = \sum_{y \in \{0,1\}} -P(\preds=y|\rmX) \log (P(\preds=y|\rmX))$ is the conditional entropy of the label given a sample $\rmX$. 
\end{theorem}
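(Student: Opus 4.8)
The plan is to bound the gap $\TRT - \TRS$ by the claimed unsupervised term through a pointwise estimate on $\Xcal$, then integrate. Write $\ell(\rmX) := \sum_{y\in\{0,1\}} \PT(\ermY=y\mid\rmX)\,\big(-\log P(\preds=y\mid\rmX)\big)$ for the per-sample cross-entropy; note $\ell(\rmX)\ge 0$ since $P(\preds=y\mid\rmX)\in[0,1]$ makes every $-\log P(\preds=y\mid\rmX)$ nonnegative. Because $\funcnew$ does not take $\ermA$ as an input, $\TRT = \E_{\PT(\rmX)}[\ell(\rmX)]$; and since the covariate shift assumption shares the conditional label law across source and target, the same per-sample loss computes the source population risk, $\TRS = \E_{\PS(\rmX)}[\ell(\rmX)]$. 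Absolute continuity makes the density ratio $r(\rmX) := \PS(\rmX)/\PT(\rmX)$ well defined and nonnegative $\PT$-a.s., so
\[
\TRT - \TRS \;=\; \int_\Xcal \ell(\rmX)\,\big(\PT(\rmX) - \PS(\rmX)\big)\,d\rmX \;=\; \int_\Xcal \ell(\rmX)\,\PT(\rmX)\,\big(1 - r(\rmX)\big)\,d\rmX .
\]

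Next I would record two pointwise inequalities, valid $\PT$-a.s. First, the elementary bound $1 - t \le e^{-t}$ for all $t\in\R$ gives $1 - r(\rmX) \le e^{-r(\rmX)}$, and since $\ell(\rmX)\ge 0$ this survives multiplication: $\ell(\rmX)\,\big(1-r(\rmX)\big) \le \ell(\rmX)\,e^{-r(\rmX)}$. Second, the hypothesis $\PT(\ermY=y\mid\rmX)\le \epsilon\,P(\preds=y\mid\rmX)$ together with $-\log P(\preds=y\mid\rmX)\ge 0$ gives, term by term in $y$,
\[
\ell(\rmX) \;=\; \sum_{y}\PT(\ermY=y\mid\rmX)\,\big(-\log P(\preds=y\mid\rmX)\big)\;\le\;\epsilon\sum_{y}P(\preds=y\mid\rmX)\,\big(-\log P(\preds=y\mid\rmX)\big)\;=\;\epsilon\,\mathcal{H}(\preds\mid\rmX).
\]
Chaining these, and using $\PT(\rmX)\ge 0$, $e^{-r(\rmX)}\ge 0$,
\[
\ell(\rmX)\,\PT(\rmX)\,\big(1-r(\rmX)\big)\;\le\;\ell(\rmX)\,\PT(\rmX)\,e^{-r(\rmX)}\;\le\;\epsilon\,\mathcal{H}(\preds\mid\rmX)\,\PT(\rmX)\,e^{-r(\rmX)} .
\]
Integrating over $\Xcal$ and substituting the identity for $\TRT-\TRS$ from the first paragraph yields $\TRT \le \TRS + \epsilon\,\E_{\PT(\rmX)}\big[e^{-\PS(\rmX)/\PT(\rmX)}\,\mathcal{H}(\preds\mid\rmX)\big]$, which is \eqref{eq:main_bound}.

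Every step is short, so the only real subtlety is \emph{ordering the two inequalities correctly}: $1-r(\rmX)$ is negative wherever $\PS(\rmX) > \PT(\rmX)$, so one must not first replace $\ell(\rmX)$ by $\epsilon\,\mathcal{H}(\preds\mid\rmX)$ and then $1-r(\rmX)$ by $e^{-r(\rmX)}$ without bookkeeping on signs; routing through the intermediate quantity $\ell(\rmX)e^{-r(\rmX)}$ is clean because $e^{-r(\rmX)}\ge 0$ makes both substitutions monotone. The other point to state explicitly is the identity $\TRS = \E_{\PS(\rmX)}[\ell(\rmX)]$, i.e. that the per-sample loss built from the \emph{target} conditional also computes the source population risk — this is the only place the covariate shift assumption is used, beyond mutual absolute continuity and the $\epsilon$-bound on $\PT(\ermY=y\mid\rmX)/P(\preds=y\mid\rmX)$.
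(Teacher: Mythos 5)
Your proof is correct and follows essentially the same route as the paper: the same decomposition of $\TRT - \TRS$ as $\E_{\PT(\rmX)}\bigl[\bigl(1 - \PS(\rmX)/\PT(\rmX)\bigr)\,\ell(\rmX)\bigr]$ for the per-sample cross-entropy $\ell(\rmX)$, followed by the same two pointwise bounds ($1-t\le e^{-t}$ and the $\epsilon$-hypothesis). The only difference is the order in which you apply the two inequalities: the paper invokes the $\epsilon$-bound first, a step that is pointwise valid only on the region where $\PT(\rmX)\ge\PS(\rmX)$, whereas your ordering (first $1-r\le e^{-r}$ using $\ell\ge 0$, then $\ell\le\epsilon\,\mathcal{H}$ against the nonnegative weight $e^{-r}\PT$) keeps every substitution sign-safe, so your bookkeeping is if anything the cleaner of the two and the final bound is identical.
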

\begin{proof}%
The proof is relegated to the appendix.
\end{proof}

\textbf{Note}: Sections marked as A.x, B.y and C.z refer to sections in the supplementary section.
The mild assumption on $\epsilon$  in the theorem is also justified via extensive experiments in appendix.

We \textit{emphasize} that this result also provides an important connection and a rationale for using entropy based objectives as an unsupervised adaptation objective from an importance sampling point of view that has been missing in the literature ~\citep{wang2021tent, https://doi.org/10.48550/arxiv.1909.13231}. 

Entropy objective is imposed on points that are more typical with respect to the test than the training. Conversely, in the region where samples are less likely with respect to the test distribution, since it has been optimized for label prediction as part of training, the entropy objective is not imposed strongly. The above bound however hinges on the assumption that pointwise in the domain $\Xcal$, $\funcnew$ approximates the true soft predictor by at most a constant factor $\epsilon$. To ensure a small value of $\epsilon$, we resort to pre-training $\funcnew$ with only $\TR$ samples for a few epochs before imposing any other type of regularization. %

\subsection{Theoretical Analysis}\label{sec:theory}
The most widely used objective to optimize for (Left Hand Side) L.H.S of (\ref{eq:main_bound}), i.e. $\TRT$, leverages \textit{importance sampling} \citep{sugiyama2007covariate}, which we denote as $\mathcal{R}_{IS}$ here for clarity. We denote R.H.S of (\ref{eq:main_bound}) by $\mathcal{R}_{WE}$. Our method is motivated by the R.H.S of (\ref{eq:main_bound}). Here, we compare the generalization bounds for $\mathcal{R}_{IS}$ and  $\mathcal{R}_{WE}$ . We make the following assumptions to simplify the analysis as the task is to compare $\mathcal{R}_{IS}$ against $\mathcal{R}_{WE}$ only, however some of these can be relaxed trivially.

\begin{assumption}\label{assump}
\begin{itemize}[itemsep=-3pt]
\item Let $\Theta= \{ \theta_1  \ldots \theta_k\}$ be finite parameter space.
\item Let the losses $l_1(\cdot) = -log(P_{\theta}(\preds=\rmY|\rmX))$  and $l_2(\cdot) = \sum_{y \in \{0,1\} } - P_{\theta}(\hat{Y}=y|\rmX)  \log P_{\theta}(\hat{Y}=y|\rmX)$ be bounded between $[0,1]$ in the domain $\{0,1\} \times {\cal X}$ for all $\theta \in \Theta$. This is not a heavy assumption and can be achieved via appropriate Lipschitz log loss over bounded domain.
\item  Denoting the important weights $z(\rmX) = \frac{\PT(\rmX)}{\PS(\rmX)}$ (assuming we have access to exact importance weights), let $\sup \limits_{\rmX \in {\cal X}} z(\rmX) = M$ and the variance of the weights with respect to the training distribution be $\sigma^2$ .%
\end{itemize}
\end{assumption}

For the $\mathcal{R}_{IS}$ objective, we have the following the result,
\begin{theorem}\label{thm:IS_main}
Under Assumption \ref{assump}, with probability $1- \delta$ over the draws of $\TR \sim \PS$,  we have $\forall \theta \in \Theta$:
    $\mathbb{E}_{\PS}[\mathcal{R}_{IS}(\theta)] \leq  \widehat{\mathcal{R}}_{IS}(\theta) + \frac{2 M (\log \lvert \Theta \rvert + \log (1/\delta)) }{3 \lvert \TR \rvert} + \nonumber   \sqrt{2\sigma^2 \frac{(\log \lvert \Theta \rvert+  \log  (1/\delta)) }{ \lvert \TR \rvert} } $
\end{theorem}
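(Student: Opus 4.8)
The statement is a uniform (over the finite class $\Theta$) Bernstein-type deviation bound for the importance-weighted empirical risk, so the plan has three steps: (i) for a fixed $\theta$, rewrite $\widehat{\mathcal{R}}_{IS}(\theta)$ as an average of i.i.d.\ bounded random variables whose common mean, by change of measure, is exactly the population importance-weighted risk $\mathbb{E}_{\PS}[\mathcal{R}_{IS}(\theta)] = \TRT$; (ii) apply Bernstein's inequality with the correct range and variance proxies; (iii) take a union bound over the $|\Theta|$ parameters.

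For step (i), fix $\theta\in\Theta$ and define, for each $(\rmX_i,\ermA_i,\ermY_i)\in\TR$, the random variable $W_i := z(\rmX_i)\, l_1(\rmX_i,\ermY_i;\theta)$ with $l_1(\cdot) = -\log P_\theta(\preds=\ermY|\rmX)$, so that $\widehat{\mathcal{R}}_{IS}(\theta) = \tfrac{1}{|\TR|}\sum_i W_i$. Since $\TR$ is drawn i.i.d.\ from $\PS$, the $W_i$ are i.i.d., and since $\PT$ and $\PS$ are mutually absolutely continuous the change-of-measure identity gives $\mathbb{E}_{\PS}[W_i] = \mathbb{E}_{\PS}\!\big[\tfrac{\PT(\rmX)}{\PS(\rmX)}\, l_1(\rmX,\ermY;\theta)\big] = \mathbb{E}_{\PT}[l_1(\rmX,\ermY;\theta)] = \TRT = \mathbb{E}_{\PS}[\mathcal{R}_{IS}(\theta)]$. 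By Assumption~\ref{assump}, $l_1\in[0,1]$ and $z(\rmX)\le M$ a.s., so $W_i\in[0,M]$ and hence $|W_i - \mathbb{E} W_i|\le M$. For the variance, $\Var_{\PS}(W_i) \le \mathbb{E}_{\PS}[z(\rmX)^2 l_1^2] \le \mathbb{E}_{\PS}[z(\rmX)^2]$, which we bound by $\sigma^2$ (the second moment of the weights; note $\mathbb{E}_{\PS}[z]=1$ so this is $\sigma^2$ up to a harmless additive constant that may be folded into the bound, and alternatively $l_1\le 1$ together with $z\le M$ yields the cruder bound $\Var_{\PS}(W_i)\le M\,\TRT\le M$).

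For steps (ii)--(iii), Bernstein's inequality applied to $\{W_i\}_{i\in[|\TR|]}$ with range $M$ and variance $\sigma^2$ gives, for fixed $\theta$ and any $\delta'\in(0,1)$, with probability at least $1-\delta'$,
\begin{equation*}
\mathbb{E}_{\PS}[\mathcal{R}_{IS}(\theta)] - \widehat{\mathcal{R}}_{IS}(\theta) \;\le\; \sqrt{\frac{2\sigma^2\log(1/\delta')}{|\TR|}} + \frac{2M\log(1/\delta')}{3|\TR|}.
\end{equation*}
Setting $\delta' = \delta/|\Theta|$ and taking a union bound over all $\theta\in\Theta$, and using $\log(|\Theta|/\delta) = \log|\Theta| + \log(1/\delta)$, yields the claimed inequality simultaneously for all $\theta\in\Theta$ with probability at least $1-\delta$.

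The argument is essentially a textbook concentration-plus-union-bound, so there is no genuinely hard step; the only point requiring care is the variance accounting in step (i) — ensuring the deviation term is controlled by $\sigma^2$ (and $M$ in the lower-order term) rather than a larger quantity. This is also the conceptually important point: the deviation of $\mathcal{R}_{IS}$ is governed by the \emph{range} and \emph{variance} of the importance weights, both of which become large under severe or asymmetric covariate shift with few test samples, which is precisely the dependence that the weighted-entropy objective $\mathcal{R}_{WE}$ is designed to avoid.
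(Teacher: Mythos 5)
Your proof is correct and is essentially the paper's own argument unpacked: the paper proves this theorem by directly invoking the importance-weighting generalization bound of Cortes et al.\ (restated in its appendix), which is exactly the Bernstein-plus-union-bound derivation you carry out for the i.i.d.\ bounded variables $z(\rmX_i)\,l_1(\rmX_i,\ermY_i;\theta)\in[0,M]$ over the finite class $\Theta$. The one wrinkle you flag---whether $\sigma^2$ is the variance or the second moment of the weights---is a looseness already present in the paper's own Assumption versus the cited theorem, so your treatment matches the paper's.
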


Whereas for our objective $\mathcal{R}_{WE}$ (posing $\epsilon$ as a hyperparameter $\lambda$),
\begin{theorem} \label{thm:WE_main}
Under Assumption \ref{assump}, we have that with probability $1-2 \delta$ over the draws of $\TR \sim \PS$ and $\TE \sim \PT$, we have $\forall \theta \in \Theta$
    $\mathbb{E}_{\PS,\PT}[\mathcal{R}_{WE}(\theta)] \leq \widehat{\mathcal{R}}_{WE}(\theta) + 2  \sqrt{\frac{ 2\log \lvert \Theta \rvert}{ \lvert \TR \rvert} } + \nonumber 
     2 \lambda \sqrt{\frac{ 2\log \lvert \Theta \rvert}{ \lvert \TE \rvert} }  +  3  \sqrt{ \frac{\ln(2/\delta)}{2 \lvert \TR \rvert}} +  3 \lambda \sqrt{ \frac{\ln(2/\delta)}{2 \lvert \TE \rvert}}$
\end{theorem}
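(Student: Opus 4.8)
The plan is to split $\mathcal{R}_{WE}(\theta)$ --- the right-hand side of \eqref{eq:main_bound}, with $\epsilon$ replaced by the hyperparameter $\lambda$ --- into two pieces estimated from two independent samples, and to control each piece by a finite-class uniform-convergence bound. Write $\mathcal{R}_{WE}(\theta) = \TRS(\theta) + \lambda\, r(\theta)$, where $r(\theta) := \Ex_{\PT(\rmX)}\bigl[ e^{-1/z(\rmX)}\, \mathcal{H}(\preds|\rmX) \bigr]$ and $z(\rmX) = \PT(\rmX)/\PS(\rmX)$ as in Assumption~\ref{assump}, so that $\PS(\rmX)/\PT(\rmX) = 1/z(\rmX)$. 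Its empirical counterpart is $\widehat{\mathcal{R}}_{WE}(\theta) = \ERS(\theta) + \lambda\, \widehat{r}(\theta)$ with $\widehat{r}(\theta) = \frac{1}{|\TE|}\sum_{\rmX_j \in \TE} e^{-1/z(\rmX_j)}\, \mathcal{H}(\preds|\rmX_j)$. Since the exact importance weights are available by Assumption~\ref{assump}, $\widehat{r}(\theta)$ is an average of i.i.d.\ terms over $\rmX_j \sim \PT(\rmX)$ with mean $r(\theta)$, and $\ERS(\theta)$ is an average of i.i.d.\ terms over $(\rmX_i,\rmY_i) \sim \PS$ with mean $\TRS(\theta)$; thus $\mathcal{R}_{WE}(\theta) = \Ex_{\PS,\PT}[\widehat{\mathcal{R}}_{WE}(\theta)]$, which is the quantity on the left-hand side of the statement.

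First I would record that the two loss families involved are $[0,1]$-valued. By Assumption~\ref{assump} the cross-entropy loss $(\rmX,\rmY) \mapsto -\log P_\theta(\preds=\rmY|\rmX)$ and the entropy loss $\rmX \mapsto \mathcal{H}(\preds|\rmX)$ (which is exactly the loss $l_2$ of Assumption~\ref{assump}) both lie in $[0,1]$ over $\{0,1\}\times\Xcal$ for every $\theta \in \Theta$; and since $z(\rmX) > 0$ by the mutual absolute continuity assumed in Theorem~\ref{theorem:th_1}, we have $0 < e^{-1/z(\rmX)} \le 1$, so the weighted-entropy loss $\rmX \mapsto e^{-1/z(\rmX)}\, \mathcal{H}(\preds|\rmX)$ also takes values in $[0,1]$. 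This boundedness holds \emph{regardless} of how large $M$ and $\sigma^2$ are; it is precisely the mechanism by which the $\mathcal{R}_{WE}$ bound avoids the $M$ and $\sigma^2$ dependence appearing in Theorem~\ref{thm:IS_main}, where the importance-weighted summands can themselves be as large as $M$ and have variance proportional to $\sigma^2$.

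Next I would apply a standard Rademacher-complexity uniform deviation bound (symmetrization together with two applications of McDiarmid's inequality, yielding the empirical-Rademacher form) separately to each piece over the finite class $\Theta$, bounding the empirical Rademacher complexity of each finite loss class by Massart's lemma, $\widehat{\mathfrak{R}}_S \le \sqrt{2\log|\Theta| / |S|}$ (using $|\Theta| = k$ and the $[0,1]$ bound). For the source piece this gives: with probability at least $1-\delta$ over $\TR \sim \PS$, for all $\theta \in \Theta$, $\TRS(\theta) \le \ERS(\theta) + 2\sqrt{2\log|\Theta| / |\TR|} + 3\sqrt{\ln(2/\delta) / (2|\TR|)}$. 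The identical argument applied to $r(\theta)$ with the weighted-entropy loss class gives, with probability at least $1-\delta$ over $\TE \sim \PT$ and for all $\theta$, $r(\theta) \le \widehat{r}(\theta) + 2\sqrt{2\log|\Theta| / |\TE|} + 3\sqrt{\ln(2/\delta) / (2|\TE|)}$. A union bound makes both hold simultaneously with probability at least $1-2\delta$; adding the first inequality to $\lambda$ times the second, and using $\mathcal{R}_{WE}(\theta) = \TRS(\theta) + \lambda\, r(\theta)$ and $\widehat{\mathcal{R}}_{WE}(\theta) = \ERS(\theta) + \lambda\, \widehat{r}(\theta)$, reproduces exactly the claimed bound.

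I do not foresee a genuine obstacle: once the decomposition is set up, the argument is two invocations of textbook generalization bounds. The points that need care are (i) that the weighted-entropy integrand depends on $\rmX$ alone, so $\TE$ truly supplies i.i.d.\ realizations of the relevant random variable, and the regime $m \ll n$ shows up only through the explicit $|\TE|^{-1/2}$ terms (which carry the $\lambda$ multiplier); and (ii) spelling out why no $M$ or $\sigma^2$ appears, which is the entire reason the result is stated alongside Theorem~\ref{thm:IS_main}. A last small check is that rewriting $\PS(\rmX)/\PT(\rmX)$ as $1/z(\rmX)$ and plugging the exact weights into $\widehat{r}$ keeps it an unbiased estimate of $r$, so no additional bias term enters.
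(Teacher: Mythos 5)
Your proposal is correct and follows essentially the same route as the paper's own proof: decompose $\mathcal{R}_{WE}$ into the source cross-entropy piece and $\lambda$ times the weighted-entropy piece, note both loss families are $[0,1]$-bounded (the weight $e^{-\PS(\rmX)/\PT(\rmX)}\le 1$), apply the standard Rademacher/Massart finite-class generalization bound to each piece on its respective sample, and combine via a union bound. No gaps.
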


The proofs can be found in appendix.
Comparing Theorem \ref{thm:IS_main} and Theorem \ref{thm:WE_main}, we see that the generalization bound for importance sampled objective, $\mathcal{R}_{IS}$, depends on variance of importance weights $\sigma^2$ and also the worst case value $M$. In contrast, our objective, $\mathcal{R}_{WE}$, \textit{does not} depend on these parameters and thus does not suffer from high variance. These results are further justified empirically in section \ref{sec:main_results_}.  %

\subsection{Weighted Entropy Objective}
\label{subsec:weighted-ent-obj}
Implementing the objective in (\ref{eq:main_bound}), requires computation of %
$\frac{\PS(\rmX)}{\PT(\rmX)}$. This is  challenging when $m$ (amount of unlabeled test samples) is small and typical way of density estimation in high dimensions is particularly hard. Therefore, we propose to estimate the ratio $\frac{\PS(\rmX)}{\PT(\rmX)}$ by a parametrized network $\funcnew_w : \Xcal \rightarrow \mathbb{R}$, where $\funcnew_w(\rmX)$ shall satisfy the following constraints: $ \mathbb{E}_{\rmX \sim \PT(\rmX)}[\funcnew_w(\rmX)]=1, ~\mathrm{~and~} \mathbb{E}_{\rmX \sim \PS(\rmX)}[1/(\funcnew_w(\rmX))]=1 $.
By definition, these constraints must be satisfied.

Building on (\ref{eq:main_bound}), we solve for the following upper bound in Theorem ~\ref{theorem:th_1}:
\begin{align}
\label{eq:worst_case_ent}
    &\max_{\theta(\funcnew_w)}  \TRS +  \epsilon \times \mathbb{E}_{\PT(\rmX)} \left[ e^{\left(- \funcnew_w(\rmX)\right)} \mathcal{H(\preds|\rmX)} \right] \nonumber \\ ~ &\mathrm{s.t.~} \mathbb{E}_{\rmX \sim \PT(\rmX)}[ \funcnew_w(\rmX)]=1, ~\mathbb{E}_{\rmX \sim \PS(\rmX)}[1/(\funcnew_w(\rmX))]=1
\end{align}

\textbf{Final Learning Objective:} Finally, we plug in the empirical risk estimator for $\TRS$, approximate the expectation in second term with the empirical version over $\TE$, posit $\epsilon$ as a hyperparameter and add the unfairness objective  ${\hat{\L}_{Wass}}(\TE)$ as in (\ref{eq:obj_interest}). Furthermore, we utilize the output of the representation layer $g$ (denoting $F= h \circ g$, where $g$ is the encoder subnetwork and $h$ is the classifier subnetwork, 
as input to $\funcnew_w$ rather than raw input $\rmX$ (provable benefits of representation learning \citep{https://doi.org/10.48550/arxiv.1706.04601}). The optimization objective thus becomes:
\begin{align}
 & \min_{\funcnew_{\theta}} \max_{\funcnew_w} \L(\funcnew_{\theta},\funcnew_w) = \ERS + \nonumber \\
  &\lambda_1 \frac{1}{m} \sum_{\rmX_i \in \TE} \left[ e^{(-\funcnew_w(g(\rmX_i)))} \mathcal{H(\preds|\rmX)} \right] + \lambda_2{\hat{\L}_{Wass}}(\TE) \nonumber \\ 
&\mathrm{s.t.~} \mathcal{C}_1 = \frac{1}{m} \sum_{\rmX_i \in \TE} \funcnew_w(g(\rmX_i)) = 1 , ~\mathrm{~and~}\nonumber\\ &\mathcal{C}_2 = \frac{1}{n} \sum_{\rmX_i \in \TR} \frac{1}{\funcnew_w(g(\rmX_i))} = 1
\label{eq:entropy_final_obj}
\end{align}
Here $\lambda_1$ and $\lambda_2$ are hyperparameters governing the objectives. $\mathcal{C}_1$ and $\mathcal{C}_2$ refer to the constraints. We use alternating gradient updates to solve the above min-max problem. Our entire learning procedure consists of \textit{two stages}: (1) pre-training $\funcnew$ for some epochs with only $\TR$ and (2) further training $\funcnew$ with (\ref{eq:entropy_final_obj}). The procedure is summarized in Algorithm~\ref{our_algorithm} and a high level architecture is provided in appendix.

\begin{algorithm}[t]
    \caption{Gradient Updates for the proposed objective to learn fairly under covariate shift}
    \label{our_algorithm}
    
    \begin{algorithmic}
   \STATE {\bfseries Input:} Training data $\TR$, Unlabelled Test data $\TE$, model $\funcnew$, weight estimator $\funcnew_w$, decaying learning rate $\eta_t$, number of pre-training steps $\tilde{\mathcal{E}}$, number of training steps $\mathcal{E}$ for eq ~\ref{eq:entropy_final_obj}, $\lambda_1, \lambda_2$
    \STATE {\bfseries Output:} Optimized parameters $\theta^{*}$ of the model $\funcnew$
    \STATE
    \STATE $\theta^{0} \gets$ random initialization 
    \FOR{$t\gets1$  {\bfseries to}  $\tilde{\mathcal{E}}$}
    \STATE   $\theta^{t} \gets \theta^{t-1} - \eta_t \nabla_{\theta^{t-1}} \ERS$ 
    \ENDFOR
    \STATE $w^{\tilde{\mathcal{E}}} \gets$ random initialization 
    \FOR{$t\gets\tilde{\mathcal{E}}+1$ {\bfseries to} $\mathcal{E}+\tilde{\mathcal{E}}$}
     \STATE $w^{t} \gets w^{t-1}$ + $\eta_t \nabla_{w^{t-1}}\L(\theta^{t-1},w^{t-1}) $ ; \hspace{0.3cm} /* subject to $\mathcal{C}_1$ and $\mathcal{C}_2$  */
     \STATE $\theta^{t} \gets \theta^{t-1} - \eta_t \nabla_{\theta}\L(\theta^{t-1},w^{t})$; \hspace{0.3cm} /* gradient stopping is applied through $\funcnew_w$ in this step*/
    \ENDFOR
   \STATE $\theta^{*} \gets \theta^{\mathcal{E}+\tilde{\mathcal{E}}}$
    \end{algorithmic}
\end{algorithm}

\section{Experiments}
\label{sec:experiments}

We demonstrate our method on 4 widely used benchmarks in the fairness literature, i.e. Adult, Communities and Crime, Arrhythmia and Drug Datasets with detailed description in appendix.
The baseline methods used for comparison are: MLP, Adversarial Debias (AD)~\citep{10.1145/3278721.3278779}, Robust Fair (RF)~\citep{NEURIPS2020_d6539d3b}, Robust Shift Fair (RSF)~\citep{rezaei2021robust}, Z-Score Adaptation (ZSA). Along these, we also compare against two popular Density ratio estimation techniques, \cite{NIPS2007_be83ab3e} (KLIEP) and \cite{JMLR:v10:kanamori09a} (LSIF), that estimate the ratio $\frac{\PT(\rmX)}{\PS(\rmX)}$ via a parametrized setup. The estimates are then used to compute the \textit{importance weighted} training loss $\mathcal{R}_{IS}$ described previously. \cite{pmlr-v48-menon16} analysed both these methods in a unifying framework. We further provide comparisons against an adapted RGD-Exp \cite{kumar2023stochastic} as described in section \ref{sec:Existing-Methods} to demonstrate simply adapting DRO methods would not suffice. The detailed description for all the baselines is provided in appendix.
These baselines also cover the important works highlighted in Section~\ref{sec:related_work}.

The implementation details of all the methods with relevant hyperparameters are provided in appendix. 
The evaluation of our method against the baselines is done via the trade-off between fairness violation (using $\EOdds$) and error (which is $100 -$ accuracy). All algorithms are run $50$ times before reporting the mean and the standard deviation in the results. All experiments are run on single NVIDIA Tesla V100 GPU.

Apart from the primary results on standard and asymmetric shift below, extensive analyses across multiple settings are provided in appendix (due to space limitations).
\subsection{Shift Construction}
\label{subsec:shift_construction}
To construct the covariate shift in the datasets, \ie, to introduce $\PS(\rmX, \ermA) \neq \PT(\rmX, \ermA)$, we utilize the following strategy akin to the works of~\cite{rezaei2021robust, 10.7551/mitpress/9780262170055.003.0008}. First, all the non-categorical features are normalized by \emph{z-score}. We then obtain the \emph{first principal component} of the of the covariates and further project the data onto it, denoting it by $\mathcal{P_{C}}$. We assign a score to each point $\mathcal{P_{C}}[i]$ using the density function $\Xi : \mathcal{P_{C}}[i] \rightarrow {\rm e}^{\gamma \cdot (\mathcal{P_{C}}[i] - b)} / \mathcal{Z}$. Here, $\gamma$ is a hyperparameter controlling the level of distribution shift under the split, $b$ is the $60^{th} \permil$ (percentile) of $\mathcal{P_{C}}$ and $\mathcal{Z}$ is the normalizing coefficient computed empirically. Using this, we sample $40\%$ instances from the dataset as the test and remaining $60\%$ as training. To construct the validation set, we further split the training subset to make the final train:validation:test ratio as $5:1:4$, where the test is distribution shifted. Similar procedure is used to construct the shifts for asymmetric analysis in section \ref{subsec:app_aymm_section}.

Note that for large values of $\gamma$, all the points with $\mathcal{P_{C}}_{[i]} > b$ will have high density thereby increasing the probability of being sampled into the test set. This generates a sufficiently large distribution shift. Correspondingly, for smaller values of $\gamma$, the probability of being sampled is not sufficiently high for these points thereby leading to higher overlap between the train and test distributions.

\subsection{Fairness-Accuracy Tradeoff}
\label{sec:main_results_}
\begin{figure*}[t]
    \centering
    \includegraphics[width=1\textwidth]{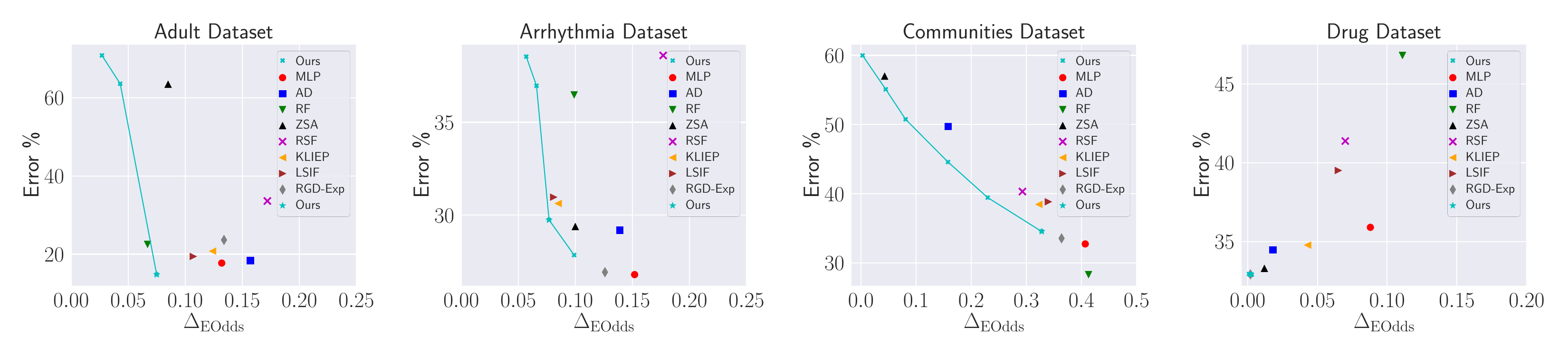}
    \caption{Fairness-Error Tradeoff Curves for our method (Pareto Frontier) against the optimal performance of the baselines. Our method provides better tradeoffs in all cases. (On Drug dataset, the performance is concentrated around the optimal point). All figures best viewed in colour.}
    \label{fig:symmetric_fig_results_pareto}
\end{figure*}
\begin{figure*}[t]
    \centering
    \includegraphics[width=1\textwidth]{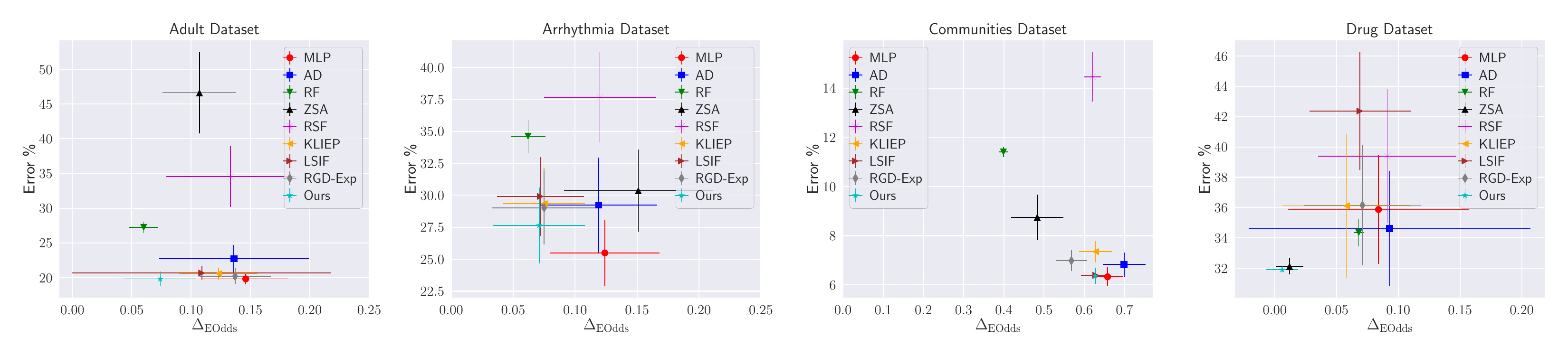}
    \caption{Comparison of our method against the baselines under Asymmetric Covariate Shift for group $\ermA = 0$.}
    \label{fig:asymmetric_fig_results_1}
\end{figure*}
The experimental results for the shift constructed using procedure in section~\ref{subsec:shift_construction} are shown in Figure \ref{fig:symmetric_fig_results_pareto}. The results closer to the \textit{bottom left} corner in each plot are desirable. 

Our method provides better error and fairness tradeoffs against the baselines on all the benchmarks. For example, on the Adult dataset, we have the lowest error rate at around 15\% with $\EOdds$ at almost $0.075$ while the closest baselines MLP and RF fall short on either of the metrics. On Arrhythmia and Communities, our method achieves very low $\EOdds$ (best on Arrhythmia with a margin of $\sim 30\%$) with only marginally higher error as compared to MLP and RF respectively. On the Drug dataset, we achieve the best numbers for both the metrics. For the same accuracy, we obtain 1.3x-2x improvements against the baselines methods on most of the benchmarks. Similarly for the same $\EOdds$, we achieve up to 1.5x lower errors. It is also important to note that all the other unsupervised adaptation algorithms perform substantially worse and are highly unreliable. For example, ZSA performs well only on the Drug dataset, but shows extremely worse errors (even worse than \textit{random predictions}) on Communities and Adult. The adaptation performed by ZSA is insufficient to handle covariate shift. RSF baseline is consistently worse across the board. This is because it tries to explicitly estimate $\PS(\rmX)$ and $\PT(\rmX)$ which is extremely challenging whereas we implicitly estimate the importance ratio. %

For KLIEP and LSIF, we equip both of these with the Wasserstein penalty term to provide a fair comparison.
First, we observe that our method consistently outperforms these algorithms across the datasets. The relative improvement of our method is as high as $\sim 31\%$ in error on Adult dataset and $\sim 32.5\times$ in $\EOdds$ on Drug dataset against LSIF. Similar non-trivial margins can be noted on other datasets. Second, as highlighted in 
the appendix, the variance in error rates of the KLIEP and LSIF based importance is very high on the Drug dataset. Particularly, both KLIEP and LSIF exhibit up to $20-40$ times higher variance in error and up to $10-12$ times in $\EOdds$. We can attribute this difference to the phenomenon that in the small sample regime, importance weighted objective on training dataset alone may not bring any improvements for covariate shift due to variance issues and thus estimating the ratio can be insufficient.

\textbf{Variance Details:} 
The detailed plots with variance corresponding to figure \ref{fig:symmetric_fig_results_pareto} are provided in appendix. In some cases, the standard deviation bars in the figure stretch beyond $0$ in $\mathbb{R}^{-}$ due to skewness when error bars are plotted, however numbers across all the runs are \textit{positive}. Low variance results of our method are notable, as discussed in section \ref{sec:main_results_} especially against KLEIP and LSIF.
\subsection{Results on Asymmetric Shift}
\label{subsec:app_aymm_section}
Here, we present empirical results for Asymmetric Covariate Shift where the degree of shift is substantially different across the groups. %
To construct data for this setting, we follow the same procedure as described in section~\ref{subsec:shift_construction}, but operate on data for the two groups differently. The shift is introduced in one of the groups while for the other group, we resort to  splitting it randomly into train-val-test.

Figure~\ref{fig:asymmetric_fig_results_1} provides the results for the setup when shift is created in group $\ermA = 0$. We again observe that our method provides better tradeoffs across the board. For the shift in group $\ermA = 0$, we have substantially better results on Adult and Arrhythmia with up to $\sim$ 2x improvements on $\EOdds$ for similar error and up to $\sim$ 1.4x improvements in error for similar $\EOdds$. On the Communities dataset, MLP and AD show similar performance to ours, but much worse on the Drug dataset for both the metrics. ZSA performs comparably to our method only on Drug, but is substantially worse on other datasets. This confirms the inconsistency of the baselines under this setup as well. %
The results for shift in group $\ermA = 1$ 
(relegated to Appendix), show analogous trends. 
On the Drug dataset we are \textbf{10x} and \textbf{5x} better than the two importance sampling baselines on $\EOdds$ without the significant variance and lower error \% as well. Even on other datasets we notice strong trends for our method with lower error and lower $\EOdds$ across the board. This shows that our method performs well in the Asymmetric Covariate Shift setting against importance sampling methods. It is also important to note that the errors are lower for all the methods as compared to figure~\ref{fig:symmetric_fig_results_pareto} since only one group exhibits substantial shift while degradation in equalized odds is higher. This is in line with the reasoning provided in section~\ref{sec:motivation} based on theorem C.1 in appendix.

\section{Conclusion}
In this work, we considered the problem of unsupervised test adaptation under covariate shift to achieve good fairness-error trade-offs using a small amount of unlabeled test data. We proposed a composite loss, that apart from prediction loss on training, involves a representation matching loss along with weighted entropy loss on the unsupervised test. We experimentally demonstrated the efficacy of our formulation. Our source code is made available for additional reference. \footnote{https://github.com/google/uafcs}

\clearpage

\bibliographystyle{plainnat}
\bibliography{bib_file}

\clearpage
\appendix

\section{Appendix}

\subsection{Proofs}
\label{sec:proofs}
\begin{proof}[Proof of Theorem \ref{theorem:th_1}]
We start with rewriting the expected cross entropy loss on the test as importance sampled loss on the training distribution.

\begin{align}
    \TRT &= \mathbb{E}_{\PT(\rmX)} \left\{ \sum_{y \in \{0,1\}} -\PT(\ermY=y|\rmX) \log (P(\preds=y|\rmX)) \right\} \\ 
    & \overset{a}{=} \mathbb{E}_{\PS(\rmX)}  \{ \left( \frac{\PT(\rmX)}{\PS(\rmX)} \right) \cdot \nonumber \\ &\sum_{y \in \{0,1\}} -\PS(\ermY=y|\rmX) \log (P(\preds=y|\rmX)) \}
\end{align}

($a$) is the \textbf{Importance Weighting} technique proposed by~\citep{sugiyama2007covariate} %
\begin{align}
    & = \TRS + \mathbb{E}_{\PS(\rmX)} \left( \frac{\PT(\rmX)}{\PS(\rmX)} - 1 \right) \cdot\nonumber \\ &\left\{ \sum_{y \in \{0,1\}} -\PS(\ermY=y|\rmX) \log (P(\preds=y|\rmX)) \right\} \\
    & = \TRS + \mathbb{E}_{\PT(\rmX)} \left( 1 - \frac{ \PS(\rmX)}{\PT(\rmX)} \right) \cdot\nonumber \\ &\left\{ \sum_{y \in \{0,1\}} -\PT(\ermY=y|\rmX) \log (P(\preds=y|\rmX)) \right\}
\end{align}
\begin{align}
    & \overset{b} {\leq}\TRS + \epsilon \times \mathbb{E}_{\PT(\rmX)}  \{ \left( 1 - \frac{ \PS(\rmX)}{\PT(\rmX)} \right)  \cdot\nonumber \\ & \sum_{y \in \{0,1\}} -P(\preds=y|\rmX) \log (P(\preds=y|\rmX)) \} \\
    & \overset{c}{\leq} \TRS + \epsilon \times \mathbb{E}_{\PT(\rmX)} \left[ e^{\left(- \frac{\PS(\rmX)}{\PT(\rmX)}\right)} \mathcal{H(\preds|\rmX)}  \right],  \label{eq:semifinal_ent} %
\end{align}
(b) is because of the assumption that $\frac{\PT(\ermY|\rmX)}{ P(\preds|\rmX) } \leq \epsilon $ almost surely with respect to $\rmX \sim \PT$.

(c) This is because $ 1 - x \leq e^{-x},~ x \geq 0$.
\end{proof}

\subsection{Dataset Description}
\label{subsec:dataset_description}
The detailed description of the datasets used in this work are as follows:
\begin{itemize}
    \item \textbf{Adult} is a dataset from the UCI repository containing details of individuals. The output variable is the indicator of whether the adult makes over \$50k a year. The group attribute is gender. Following~\cite{NEURIPS2020_d6539d3b}, we use the processed data with $2213$ examples and $97$ features.
    \item \textbf{Arrhythmia} is a dataset from the UCI repository where each example is classified between the presence and absence of cardiac arrhythmia. The group attribute is gender. Following~\cite{rezaei2021robust}, we used the dataset used containing 452 examples and 279 features.
    \item \textbf{Communities and Crime} is a dataset from the UCI repository where each example represents a community. The output variable is the community having a violent crime rate in the $70^{th}$ percentile of all the communities. The group attribute is the binary indicator of the presence of the majority white population. Following~\cite{NEURIPS2020_d6539d3b}, we use the dataset with $2185$ examples and $122$ features.
    \item \textbf{Drug} is a dataset from the UCI repository where the task is to classify the type of drug consumer based on personality and demographics. The group attribute is race. Following~\cite{rezaei2021robust}, we used the dataset with 1885 samples and 11 features.
\end{itemize}

\subsection{Baselines}
\label{subsec:baselines}
We use the following baselines for comparison. This covers the exhaustive set of relevant methods described in section~\ref{sec:related_work} and Table~\ref{tab:methods}. \textbf{We reiterate that Table~\ref{tab:methods} is by no means intended to be an extensive survey, but rather a representative collection.}
 \begin{itemize}
     \item \textbf{MLP} is the standard \textit{Multi Layer Perceptron} classifier that doesn't take into account shift and fairness properties. In the standard in-distribution evaluation settings, such a model usually provides the upper bound to the accuracy without considering fairness, however the scenario differs as we are dealing with distribution shifts.
     \item \textbf{Adversarial Debiasing (AD)}~\citep{10.1145/3278721.3278779} is one of the most popular debiasing methods in the literature. This method performs well on the fairness metrics under the standard in-distribution evaluation settings, but fails to do so in the shift setting.
     \item \textbf{Robust Fair (RF)}~\citep{NEURIPS2020_d6539d3b} proposes a framework to learn classifiers that are fair not only with respect to the training distribution, but also for a broad set of distributions characterized by any arbitrary weighted combinations of the dataset. 
     \item \textbf{Robust Shift Fair (RSF)}~\citep{rezaei2021robust} is a recent and most relevant baseline to this work. The authors propose a method to robustly learn a classifier under covariate shift, with fairness constraints. A severe limitation of this method is that it requires explicit estimation of both source and target covariates' distributions. 
     \item \textbf{Z-Score Adaptation (ZSA)} following the thread of work under \textit{Batch Norm Adaptation}~\citep{li2017revisiting, NEURIPS2020_85690f81} literature, we implement a baseline that adapts the parameters of the normalizing layer by recomputing the z-score statistics from the unlabeled test data points.
     \item \textbf{Re-weighted Gradient Descent (RGD-Exp)} ~\citep{kumar2023stochastic} is a very recent work focusing on ditributional robustness where we add an additional "Wasserstein Representation Matching term", the 2nd term in eqn. \ref{eq:obj_interest}, so we are essentially replacing the first term in eqn. \ref{eq:obj_interest} by the loss mentioned in \citep{kumar2023stochastic}. We compare against this method to prove that simply adding the wasserstein representation matching term to SOTA distributional robustness methods does not suffice and something more novel is required to attempt to solve fairness under shift.
 \end{itemize}
 
The additional methods KLIEP and LISF described previously are implemented as follows: First, the importance ratio $\frac{d\PS(\rmX)}{d\PT(\rmX)}$ is estimated using unsupervised test samples and the training samples available based on the KLIEP and  LSIF losses (given in \cite{pmlr-v48-menon16}) via a parameterized weight network $s(X)$. Then, we train a classifier based on the following instance weighted cross entropy loss and representation matching loss:
      \begin{align}
        \min \limits_{(F_\theta= h_\theta \circ g_\theta )}  &\frac{1}{n} \sum_{(\rmX_i,\rmY_i,A_i) \in \TR} s(\rmX_i) \left(- \log P_{(F_\theta)}(\preds=Y_i|\rmX_i) \right)  +  \nonumber \\ &\lambda {\hat{\L}_{Wass}}(\TE) 
      \end{align}
    where $s(X)$ is a non-negative function which is obtained by minimizing:
    \begin{align}
        L_{\mathrm{KLIEP}} (s(\rmX)) = &\frac{1}{m} \sum_{\rmX_i \in {\cal D}^T} - \log s(\rmX_i) +  \nonumber \\ &\left(\frac{1}{n} \sum_{\rmX_i \in {\cal D}^S}s(\rmX_i) -1 \right)^2 
    \end{align}
      or,
        \begin{align}
        L_{\mathrm{LSIF}} (s(\rmX)) = &\frac{1}{m} \sum_{\rmX_i \in {\cal D}^T}  - s(\rmX_i) + \nonumber \\ & \frac{1}{2}  \left(\frac{1}{n} \sum_{\rmX_i \in {\cal D}^S} \left(s(\rmX_i) \right)^2 \right) 
    \end{align}

\subsection{Implementation Details}
\label{subsec:implementation_details}
We use the same model architecture across MLP and our method in order to ensure consistency. Following~\cite{wang2021equalized}, a \emph{Fully Connected Network} (FCN) with $4$ layers is used, where the first two layers compose $g$ and the subsequent layers compose $h$. For AD, we use an additional $2$ layer FCN that serves as the \emph{adversarial head} $a : g(\rmX) \rightarrow \Acal$ (similar to ~\citep{wang2021equalized}).

Without further specification, we use the following hyperparameters to train MLP, AD and ZSA. The number of epochs is set to $50$ with \emph{Adam} as the optimizer~\citep{https://doi.org/10.48550/arxiv.1412.6980} and weight decay of $1e^{-5}$ (for Adult dataset, the weight decay is $5e^{-4}$). The learning rate is set to the value of $1e^{-3}$ initially and is decayed to $0$ using the \emph{Cosine Annealing} scheduler~\citep{loshchilov2017sgdr}. A batch size of $32$ is generally used to train the models. The gradients are clipped at the value of $5.0$ to avoid explosion during training. The dropout~\citep{JMLR:v15:srivastava14a} rate is set to $0.25$ across the layers. For AD, the adversarial loss hyperparameter post grid search is used. 

RF and RSF works have tuned their model for the specific architecture and corresponding hyperparameters (different from the aforementioned specifics). We perform another grid search over these hyperparameters and report the best results for comparison.

For our proposed method, we pre-train the model for $15$ epochs with only $\ERS$. For the next $35$ epochs we use the objective in eq~\ref{eq:entropy_final_obj} (\textbf{Note}: eq~\ref{eq:entropy_final_obj} refers to the final equation in subsection \ref{subsec:weighted-ent-obj} in the main paper wherever referenced), but with a higher training data batch size to reduce variance in the \textit{Monte Carlo Estimation} of the second constraint  $\left(\frac{1}{n} \sum_{\rmX_i \in \TR} \frac{1}{\funcnew_w(g(\rmX_i))} = 1\right)$. The value of $m$ (size of $\TE$) is kept at $50$ for the main experiments, which is $<<$ size of $\TR$. The primary experiments are run with the shift magnitude $\gamma=10$ (with ablations provided in section~\ref{subsec:shift_magnitude}). The constraints $\mathcal{C}_1$ and $\mathcal{C}_2$ as mentioned in~\ref{eq:entropy_final_obj} are implemented as squared error terms where we minimize $ c_1 \cdot \left(\left(\frac{1}{m} \cdot \sum_{\rmX_i \in \TE} \funcnew_w(g(\rmX_i))\right) - 1\right)^2 + c_2 \cdot \left(\left( \frac{1}{n} \cdot \sum_{\rmX_i \in \TR} \frac{1}{\funcnew_w(g(\rmX_i))}\right) - 1\right)^2$, where $c_1$ and $c_2$ are hyperparameters to control the relative importance of each constraint. The values of the tuple ($\lambda_1$, $\lambda_2$) are set to the following - Adult : ($1, 0.01$) ; Arrhythmia : ($0.01, 0.005$) ; Communities : ($0.005, 0.0001$) and Drug : ($0.1, 0.1$) post grid search. 
\begin{figure}[t]
    \centering
    \includegraphics[width=80mm,height=40mm]{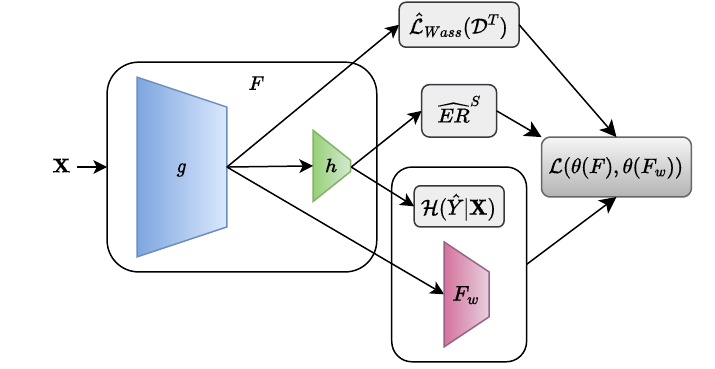}
    \caption{High level architecture of our method. Colored blocks represent parameterized sub-networks. }
    \label{fig:our_method}

\end{figure}
\subsection{Additional Experimentation}
We provide attached two additional figures: Figure \ref{fig:symmetric_fig_results_vars} corresponding to the detailed plot equivalent to Figure \ref{fig:symmetric_fig_results_pareto}. Here we highlight the high variance suffered by other baselines, especially the importance sampling ones, and reinforce the effectiveness of our method to achieve the best fairness-accuracy tradeoff. Figure \ref{fig:asymmetric_fig_results_2} corresponds to the case where the shift is across group 1. We refer you to Section \ref{sec:main_results_} and \ref{subsec:app_aymm_section} in the main paper respectively for the discussion.
\begin{figure*}[t]
    \centering
    \includegraphics[width=\textwidth]{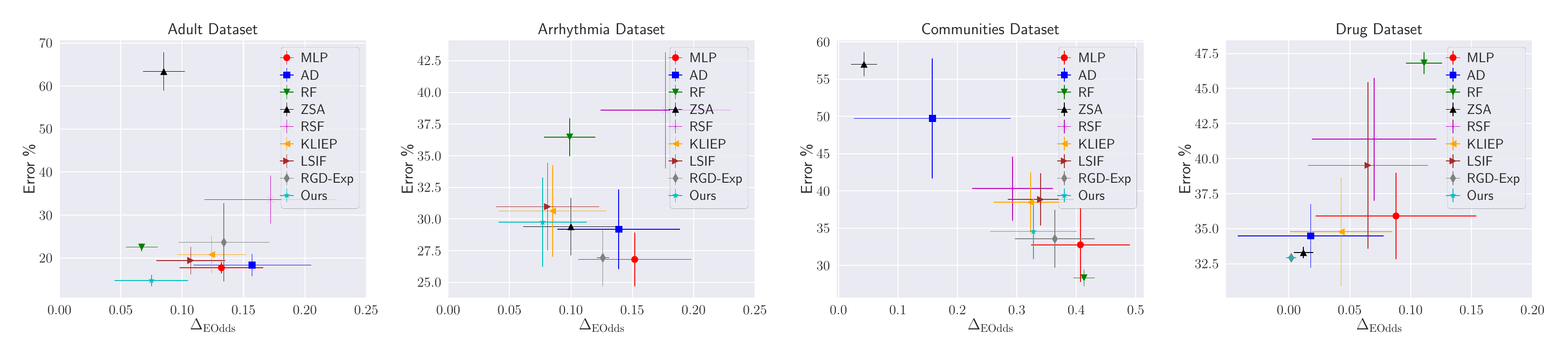}
    \caption{Comparison of our method against the baselines under Covariate Shift. The bars provide the standard deviation intervals both error (vertical) and $\EOdds$ (horizontal). }
    \label{fig:symmetric_fig_results_vars}
\end{figure*}
\begin{figure*}[t]
    \centering
    \includegraphics[width=\textwidth]{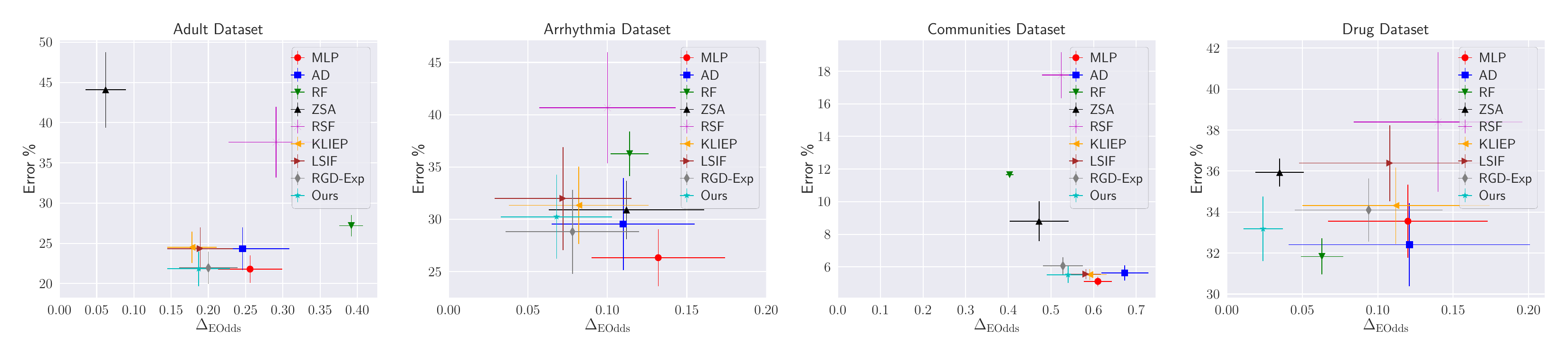}
    \caption{Comparison of our method against the baselines under Asymmetric Covariate Shift for group $\ermA = 1$}
    \label{fig:asymmetric_fig_results_2}
\end{figure*}

\section{Ablations}

\subsection{Unsupervised Adaptation with our Entropy Formulation Under Asymmetric Shift}
\label{subsec:our_entropy_asymm}
The asymmetric shift setup described in section~\ref{subsec:covar_shift} provides a well grounded motivation (section~\ref{sec:motivation}) and use case for explicitly handling shifts along with the unfairness objective. We further provide detail the rationale of the asymmetric covariate shift setup here. We work in a situation where training samples have labels while test samples do not. A1 and B1 (data from sub groups A (under-represented) and B (over-represented)) belong to training while A2 and B2 are the test data from the same subgroups. Because group B is covered well (many samples are available), it covers the domain of that subgroup quite well in the train itself which overlaps significantly with the test (see Fig. 1a). So a naive attempt at using representation matching to match the groups in the test (which will force B2 and A2 covariates to overlap in the transformed space) will end up concentrating on group B1 alone from the train since A1 need not map to A2 in the test. Therefore, a classifier built on this representation will mostly see only labels from B1 that map (roughly) to the area of A2. Our method overcomes these issues (elaborated in Section \ref{sec:Existing-Methods}) as we demonstrate empirically. 

We complement the claim with empirical evidence here. The results in table~\ref{tab:asymm_only_wass} provide comparison of the performance across the metrics with and without our proposed formulation on shifting group $A=0$. The wasserstein objective in section~\ref{sec:motivation} is retained in both settings. We observe significant improvements on both error and $\EOdds$ with our formulation. Particularly on the Drug dataset, we see an improvement of almost $4\%$ in error and around $13\times$ in the $\EOdds$, which is also notable on Arrhythmia.

\begin{table*}[t]
\small

\bigskip
\centering
\begin{tabular}{c|cc|cc}
Dataset          & \multicolumn{2}{c|}{Arrhythmia}                  & \multicolumn{2}{c}{Drug}                             \\ \hline
Entropy Variation  & \multicolumn{1}{c}{Without Entropy} & \multicolumn{1}{c|}{With Entropy (eq~\ref{eq:entropy_final_obj})} & \multicolumn{1}{c}{Without Entropy} & With Entropy (eq~\ref{eq:entropy_final_obj})   \\ \hline
Error \%                          & \multicolumn{1}{c}{28.648 \tiny(3.079)}  & 27.648 \tiny(2.978)                    & \multicolumn{1}{c}{35.859 \tiny(3.437)}  & 31.910 \tiny(0.186) \\
$\EOdds$                           & \multicolumn{1}{c}{0.080 \tiny(0.032)}   & 0.071 \tiny(0.037)                     & \multicolumn{1}{c}{0.076 \tiny(0.043)}   & 0.006 \tiny(0.013)  \\ %
\end{tabular}%
\caption{Comparison of the performance on using the unfairness objective without and with the unsupervised adaptation (our proposed entropy formulation). We observe substantial improvements in both error and $\EOdds$. Numbers in the parenthesis represent standard deviation across the 50 runs.}
\label{tab:asymm_only_wass}

\end{table*}

\subsection{Variation of $\lambda_1$ and $\lambda_2$}
\label{subsec:ablation_lambdas}
In this section, we study the variation of the performance of our method against the hyperparameters governing error ($\lambda_1$) and $\EOdds$ ($\lambda_2$). While studying the effect of either, we keep the other constant.

\begin{table*}[t]
\small

\bigskip
\centering
\begin{tabular}{c|cccccc}
$\lambda_1 \xrightarrow[]{} $ & 0              & 0.001          & 0.005          & 0.01           & 0.1            & 1.0            \\ \hline
Error (in \%)                  & 23.819 \tiny(8.593) & 22.047 \tiny(6.631) & 20.510 \tiny(6.706) & 20.851 \tiny(7.829) & 14.626 \tiny(1.318) & 14.787 \tiny(1.326) \\
$\EOdds$                   & 0.131 \tiny(0.038)  & 0.126 \tiny(0.037)  & 0.129 \tiny(0.037)  & 0.129 \tiny(0.029)  & 0.104 \tiny(0.033)  & 0.075 \tiny(0.30)   \\ %
\end{tabular}%
\caption{Variation of the performance of our method with Entropy Regularizer $\lambda_1$ on Adult dataset.}
\label{tab:change_lambda_1}

\end{table*}
Table~\ref{tab:change_lambda_1} reports the variation for $\lambda_1$ keeping $\lambda_2 = 0.01$ fixed. It is evident from the numbers that increasing $\lambda_1$ has strong correlation with the reduction in error, which exhibits a saturation at $0.1$. Higher values of $\lambda_1$ emphasize the minimization of the worst-case weighted entropy thus helping in calibration of the network in regions across $\PT$. Furthermore, we observe significant improvements in $\EOdds$ which is inline with the motivation of handling shifts along with an unfairness objective (section~\ref{sec:motivation}). Increasing $\lambda_1$ doesn't help post a threshold value as the correct estimation of the true class for a given $\rmX$ under $\PT$ becomes harder, particularly in regions far from the labeled in-distribution data. Imposing very strong $\lambda_1$ can hurt the model performance. 

The variation against $\lambda_2$, keeping $\lambda_1 = 1$ fixed is reported in table~\ref{tab:change_lambda_2}. As $\lambda_2$ increases, we observe a gradual improvement in $\EOdds$. This exhibits a maxima after which the performance degrades drastically. This is because strongly penalizing  ${\hat{\L}_{Wass}}(\TE)$ with a small number of samples $m$ leads to overfitting (illustrated by the large standard deviation) while matching $\PT(\rmX|\ermA)$. This also hurts the optimization as demonstrated by the substantial increase in error. 
\begin{table*}[t]

\centering
\begin{tabular}{c|cccccc}
$\lambda_2 \xrightarrow[]{} $ & 0              & 0.001          & 0.005          & 0.01           & 0.1             & 1.0             \\ \hline
Error (in \%)                  & 15.049 \tiny(1.424) & 15.849 \tiny(1.437) & 14.901 \tiny(1.352) & 14.787 \tiny(1.326) & 17.936 \tiny(15.962) & 42.280 \tiny(32.581) \\
$\EOdds$                    & 0.091 \tiny(0.031)  & 0.099 \tiny(0.034)  & 0.098 \tiny(0.032)  & 0.075 \tiny(0.030)  & 0.074 \tiny(0.036)   & 0.093 \tiny(0.064)   \\ %
\end{tabular}%
\caption{Variation of the performance of our method with Wasserstein Regularizer $\lambda_2$ on Adult dataset.}
\label{tab:change_lambda_2}
\end{table*}

\subsection{Shift Magnitude}
\label{subsec:shift_magnitude}
We study the variation of the performance of our method against the magnitude of shift $\gamma$ on Arrhythmia. A comparison against the best baseline ZSA is also provided. The variation of error is plotted in the left subfigure of~\ref{fig:shift_gamma}. With no shift in the data, $\gamma=0$, we observe that both the methods exhibit small errors as $\TE$ follows in-distribution. With the increase in the value of $\gamma$, ZSA shows a sudden increment in the error with an unstable pattern whereas our method exhibits a more gradual pattern and lower error as compared to ZSA. This justifies that the weighted entropy objective helps.
\begin{figure*}[t]
    \centering
    \includegraphics[width=\textwidth]{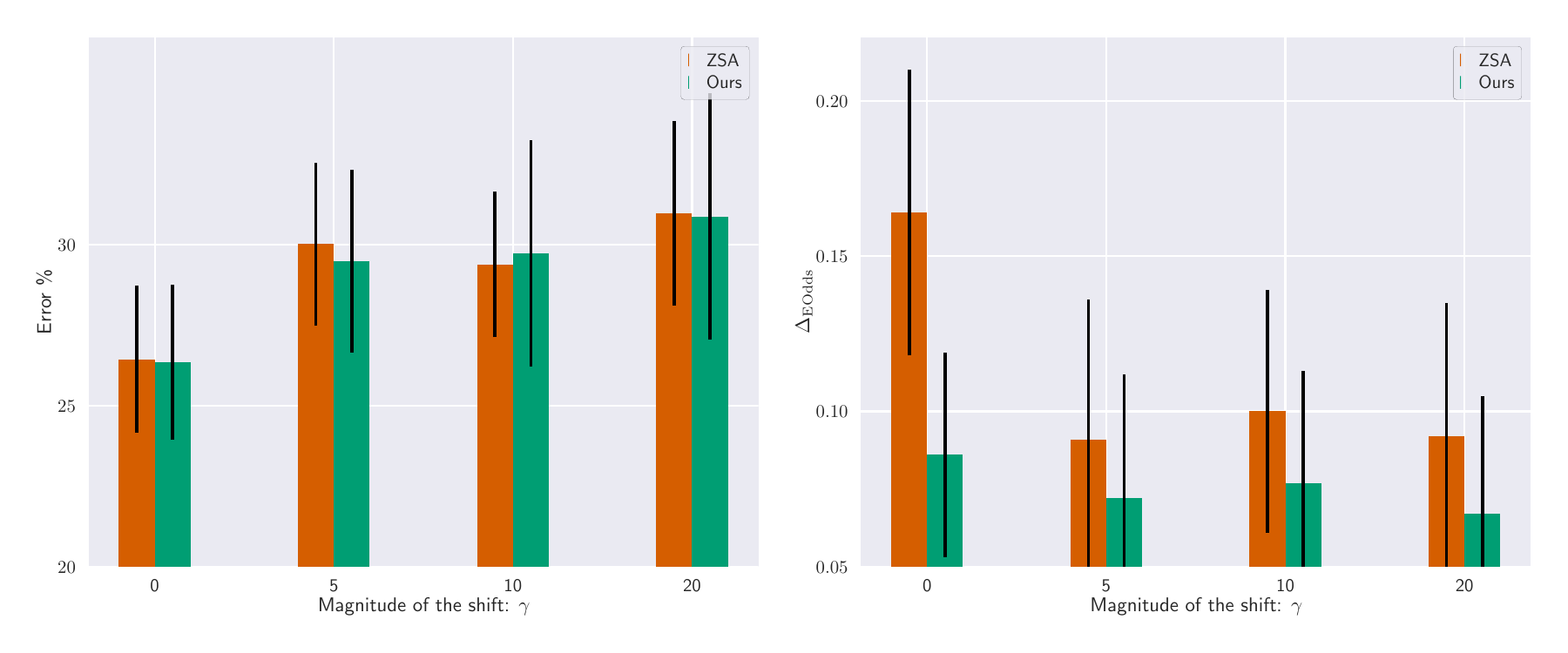}
    \caption{Variation of error against $\gamma$ (left subfigure) and $\EOdds$ against $\gamma$ (right subfigure) on Arrhythmia Dataset. We observe that our method performs better in both metrics against the best baseline ZSA. While the error increases gradually, but we observe substantially better $\EOdds$ for our method.}
    \label{fig:shift_gamma}
\end{figure*}

On the contrary, we observe that our method is highly stable over $\EOdds$ and performs consistently better for larger shifts as compared to ZSA. We attribute this effect to the proposed objective which optimizes the model to learn fairly under the shift and over the worst case scenario. %

\subsection{Variation of size of $\TE$}
\label{subsec:size_test}
Here, we study the dependence of the methods on the size of $\TE$. The left subfigure in~\ref{fig:k-shot} plots the variation of error against $m$. The error gradually decreases  for our method and RSF as the estimation of the true test distribution improves and the optimization procedure covers a larger region of $\PT$. This also makes the approximation by $\funcnew_w$ much more reliable and closer to true ratios. Although, the results don't show notable improvements after a certain threshold as we are dealing in an unsupervised regime over $\PT$. It becomes increasingly harder to correctly estimate the true class for a given $\rmX$ under $\PT$, particularly in regions far from the labeled in-distribution data. Interestingly, ZSA doesn't exhibit any improvements which demonstrates that merely matching first and second order moments across the data is not sufficient to handle covariate shifts.

The right subfigure in~\ref{fig:k-shot} plots the variation of $\EOdds$ against $m$. Here, we observe a consistent reduction in $\EOdds$ as more data from $\PT$ helps is matching representations via improved approximation of $\PT(\rmX|\ermA)$. Further this objective only deals with matching representations across the groups and doesn't stagnate as quickly with increasing $m$ as the error margins, which suffers from lack of reliable estimation in regions far from in-distribution.

We consistently outperform RSF in both very small and larger regimes of $m$, partly verifying the importance of $\funcnew_w$ rather than a direct estimation of $\PS$ and $\PT$ as RSF does. ZSA is substantially worse than both RSF and our method in terms of errors. In terms of $\EOdds$ its only marginally better than our method for $m=10$ and $m=20$, but at a huge expense of prediction performance.

\begin{figure*}[t]
    \centering
    \includegraphics[width=\textwidth]{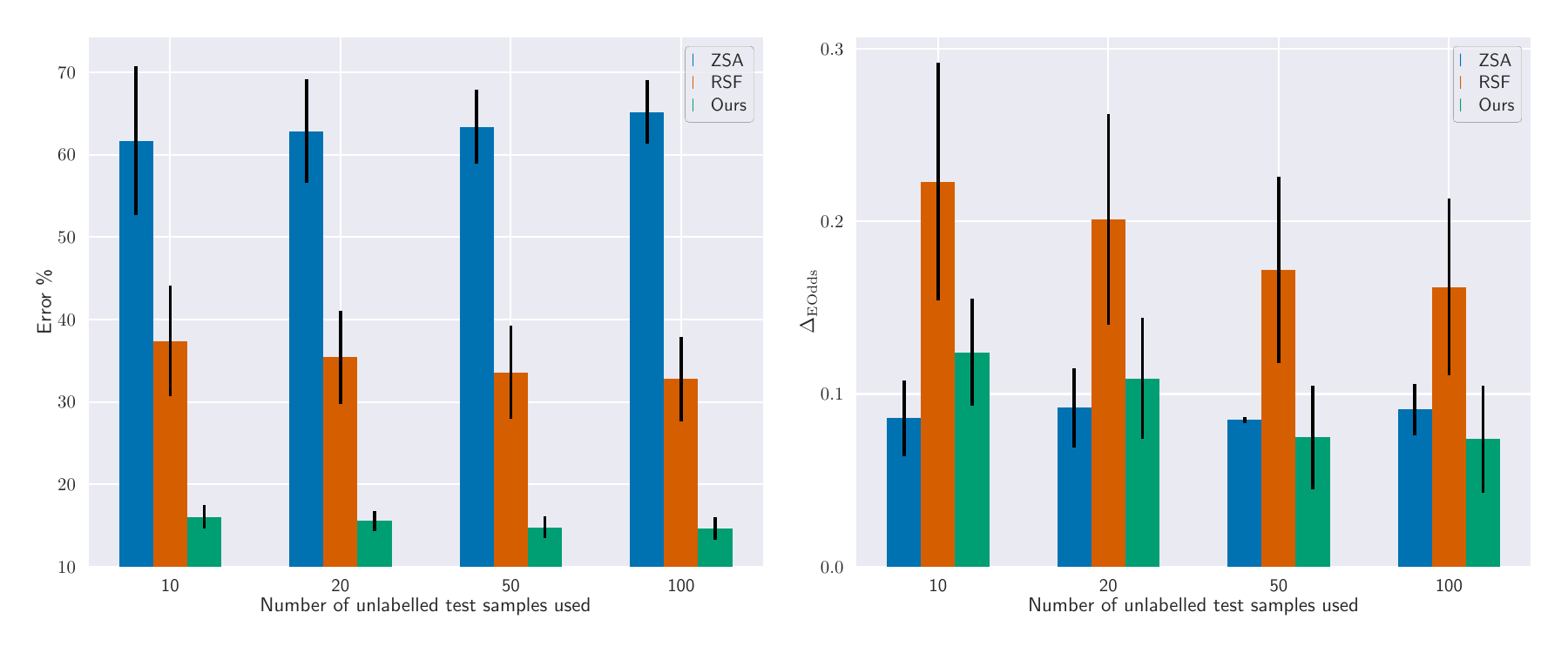}
    \caption{Variation of error against $m$ (left subfigure) and $\EOdds$ against $m$ (right subfigure) on Adult Dataset. We see reduction in both error and $\EOdds$ with increasing value of $m$.}
    \label{fig:k-shot}
\end{figure*}

\subsubsection{Unweighted Entropy vs Our Weighted Entropy Formulation}
\label{subsec:comparison_to_unweighted}
It is easy to observe that we can recover the standard unlabeled test entropy minimization using our derivation. Formally specifying, we can upper bound eq~\ref{eq:semifinal_ent} to obtain entropy as follows:
\begin{align}
    &\TRS + \epsilon \times \mathbb{E}_{\PT(\rmX)} \left[ e^{\left(- \frac{d\PS(\rmX)}{d\PT(\rmX)}\right)} \mathcal{H(\preds|\rmX)}  \right] < \nonumber \\ &\TRS + \epsilon \times \mathbb{E}_{\PT(\rmX)} \left[\mathcal{H(\preds|\rmX)}  \right], \because e^{-x} \leq 1, \forall x \geq 0
\end{align}
Our formulation particularly provides a tighter bound as compared to standard entropy and implicitly accounts for points in $\TE$ that are close to $\PS$ by assigning low weight. The experimental results comparing the two settings both with and without the unfairness objective are provided in table~\ref{tab:weight-unweight-ent}. Our formulation achieves substantially better results with a relative improvement of around $33\%$ in error. Note that due to the fairness-error tradeoff, the standard (unweighted) entropy achieves better $\EOdds$, but that is achieved at the expense of a nearly random classifier as evident from the error rate of nearly $50\%$. We also highlight the large standard deviation in the results achieved by unweighted entropy. This is largely because it seeks to minimize entropy across all $m$ points whereas our objective is more adaptive based on the approximation of importance ratio.%

\begin{table*}[t]
\small

\centering
\begin{tabular}{c|cc|cc}
 & \multicolumn{2}{c|}{Without Wasserstein Objective }                                        & \multicolumn{2}{c}{With Wasserstein Objective }                      \\ \hline
Entropy $\xrightarrow[]{}$     & \multicolumn{1}{c}{Unweighted} & \multicolumn{1}{c|}{Weighted (Ours)} & \multicolumn{1}{c}{Unweighted} & Weighted (Ours) \\ \hline
Error \%                 & \multicolumn{1}{c}{45.787 \tiny(12.900)}    & 34.291 \tiny(4.463)                        & \multicolumn{1}{c}{45.654 \tiny(13.090)}     & 35.549 \tiny(3.748)    \\
$\EOdds$                                & \multicolumn{1}{c}{0.204 \tiny(0.194)}      & 0.359 \tiny(0.074)                         & \multicolumn{1}{c}{0.201 \tiny(0.200)}      & 0.328 \tiny(0.073)    \\ %
\end{tabular}%
\caption{Comparison of the performance of Standard Unweighted Entropy v/s our Weighted Entropy formulation on Communities dataset, both with and without the Wasserstein Objective.}
\bigskip
\label{tab:weight-unweight-ent}
\end{table*}

\subsubsection{Weighting function on Inputs v/s Representations}
\label{subsec:weight_func_inputs}
The parametrized ratio estimator, $\funcnew_w$, takes as input the representations obtained from the subnetwork $g$ in eq \ref{eq:entropy_final_obj}. Here, we empirically compare and justify the use of this design choice. The experimental result in table \ref{tab:ratio_comp_rep} compares the two scenarios: (i) $\funcnew_w(\cdot)$ operating directly over the inputs and (ii) $\funcnew_w(g(\cdot))$  operating over the representation layer. From the numbers, we note that $\funcnew_w(g(\cdot))$ exhibits better fairness-error tradeoffs in general. Notably, we observe that on similar error values, the Equalized Odds are consistently better when using $\funcnew_w(g(\cdot))$. %

\begin{table*}[t]
\scriptsize
\centering

\resizebox{\textwidth}{!}{%
\bigskip
\begin{tabular}{c|cc|cc|cc|cc}
Dataset:            & \multicolumn{2}{c|}{Adult}                                            & \multicolumn{2}{c|}{Arrhythmia}                                       & \multicolumn{2}{c|}{Communities}                                      & \multicolumn{2}{c}{Drug}                                             \\ \hline 
Weighting:  & \multicolumn{1}{c}{$\funcnew_w(\rmX)$}        & \multicolumn{1}{c|}{$\funcnew_w(g(\rmX))$} & \multicolumn{1}{c}{$\funcnew_w(\rmX)$}        & \multicolumn{1}{c|}{$\funcnew_w(g(\rmX))$} & \multicolumn{1}{c}{$\funcnew_w(\rmX)$}        & \multicolumn{1}{c|}{$\funcnew_w(g(\rmX))$} & \multicolumn{1}{c}{$\funcnew_w(\rmX)$}        & \multicolumn{1}{c}{$\funcnew_w(g(\rmX))$} \\ \hline
Error \%                           & \multicolumn{1}{c}{ 14.926 \tiny(1.472)} & 14.787 \tiny(1.326)                  & \multicolumn{1}{c}{27.083 \tiny(1.899)} & 29.746 \tiny(3.519)                  & \multicolumn{1}{c}{34.078 \tiny(3.925)} & 34.549 \tiny(3.748)                  & \multicolumn{1}{c}{ 32.944 \tiny(0.180)} & 32.928 \tiny(0.143)                  \\
$\EOdds$                             & \multicolumn{1}{c}{ 0.095 \tiny(0.035)}  & 0.075 \tiny(0.030)                   & \multicolumn{1}{c}{0.134 \tiny(0.041)}  & 0.077 \tiny(0.036)                   & \multicolumn{1}{c}{0.369 \tiny(0.066)}  & 0.328 \tiny(0.073)                   & \multicolumn{1}{c}{0.003 \tiny(0.006)}  & 0.002 \tiny(0.004)                   \\ %
\end{tabular}%
}
\caption{Error-Fairness comparison for the two different settings of the ratio estimator network $\funcnew_w$. In the first setting, $\funcnew_w(\rmX)$, the raw feature vectors serve as the input whereas the setting leveraged for the experiments in this work, $\funcnew_w(g(\rmX))$, the inputs are the representations obtained from subnetwork $g$.}%
\label{tab:ratio_comp_rep}
\end{table*}

\subsubsection{Empirical investigation of the bound $\epsilon$ in Theorem~\ref{theorem:th_1}}
\label{subsec:eps_investigation}
We compare the ratio of the prediction probabilities for the classes ($y \in \{0,1\}$) on the validation set (which is not available during training to our algorithm) between a classifier trained only on the training set (Train) and a classifier trained only on the held-out test set (Test). 

We plot the ratios in figure~\ref{fig:eps_plot_2} with outliers removed. The subfigures (a),(b) demonstrate the ratio for the true class label for the samples. Subfigures (c),(d) demonstrate the ratio for class $y = 0$ and subfigures (e),(f) demonstrate the ratio for class $y = 1$. Correspondingly, in figure~\ref{fig:eps_plot} we plot the ratios with outliers. Note that atmost \textbf{4} points in every plot are outliers with $ratios > 5$. This empirically justifies that $\epsilon$ can be set \emph{not too high} with high probability except for a few outliers.

\begin{figure*}[t]
    \centering
    \includegraphics[width=\textwidth]{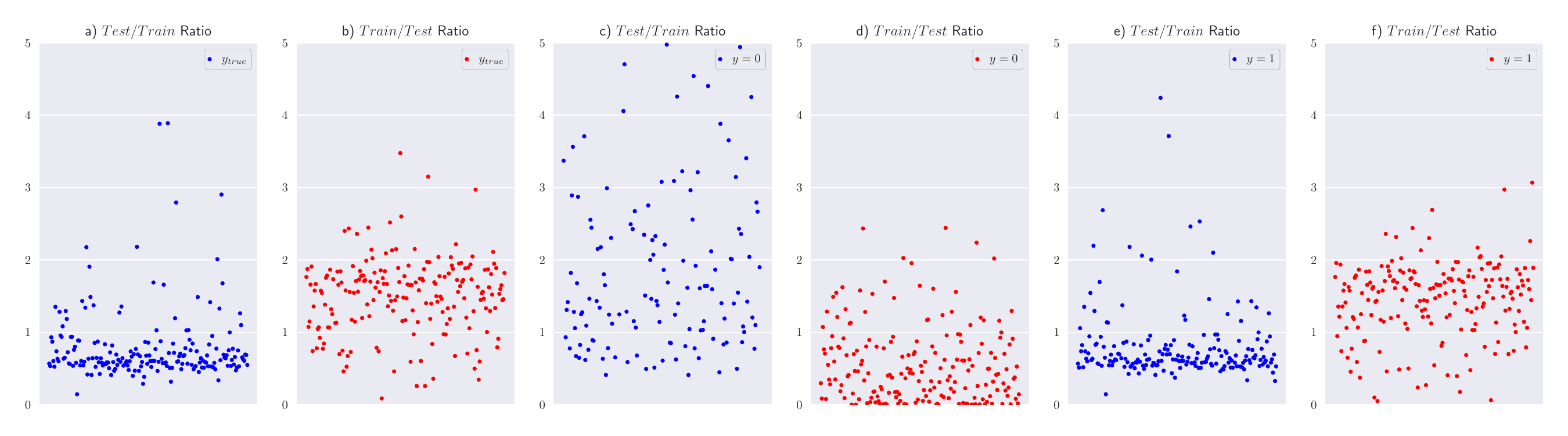}
    \caption{The subfigures demonstrate the ratio of the prediction probabilities for the classes ($y \in \{0,1\}$) on the validation set between a classifier trained only on the training set (Train) and a classifier trained only on the held-out test set (Test), with outliers removed. Note that $\epsilon=5$ provides a reasonable threshold and holds for all the samples but for 4 outliers (shown in figure~\ref{fig:eps_plot}).}
    \label{fig:eps_plot_2}

\end{figure*}

\begin{figure*}[t]
    \centering
    \includegraphics[width=\textwidth]{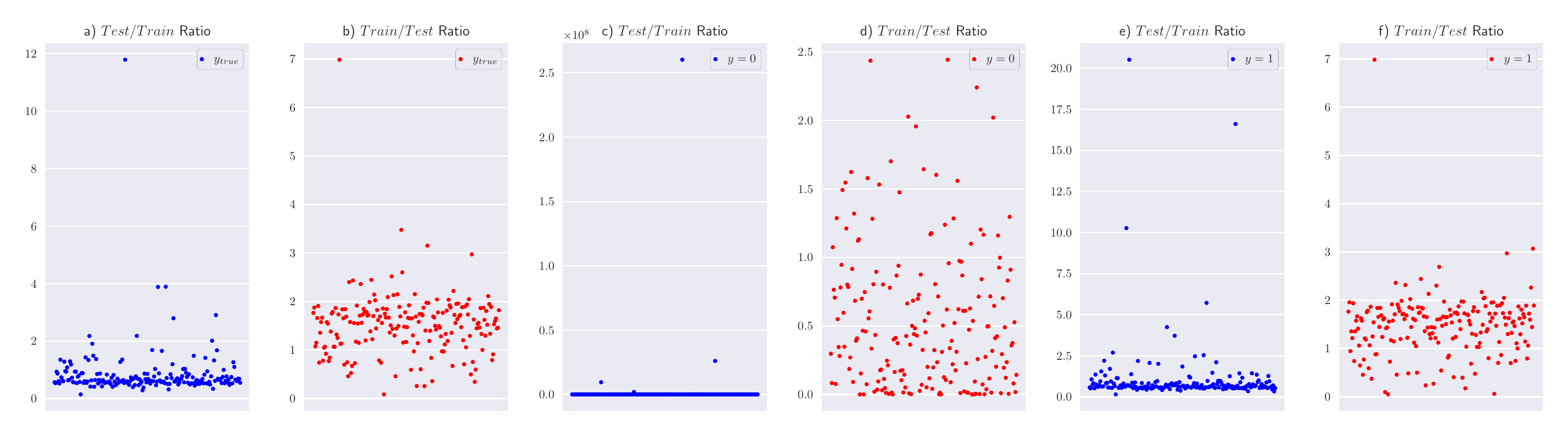}
    \caption{The subfigures demonstrate the ratio of the prediction probabilities for the classes ($y \in \{0,1\}$) on the validation set between a classifier trained only on the training set (Train) and a classifier trained only on the held-out test set (Test), with outliers. Atmost \textbf{4} points in every plot are outliers with $\mathrm{ratios} > 5$. }
    \label{fig:eps_plot}

\end{figure*}

\subsubsection{Comparison of Accuracy Parity}
\label{subsec:acc_parity_comp}
We further demonstrate the results of our method against the baselines with \emph{Accuracy Parity} as the fairness metric on Adult dataset in table~\ref{tab:acc_parity}. It is again evident from the results that our method clearly beats the baselines even on the accuracy parity front.
\begin{table*}[t]
\small
\centering

\bigskip

{%
\begin{tabular}{c|cccccccc}
Method -\textgreater{} & MLP    & AD     & RF     & RSF    & ZSA    & KLIEP  & LSIF   & Ours            \\ \hline
Error \%               & 17.735 & 18.356 & 22.525 & 33.591 & 63.396 & 20.787 & 19.428 & \textbf{14.787} \\ %
Accuracy Parity \%     & 5.764  & 3.156  & 6.094  & 11.851 & 11.131 & 4.417  & 6.677  & \textbf{2.990}  \\ \hline
\end{tabular}%
}
\caption{Comparison of Accuracy Parity as well as Error for all the methods. We outperform the baselines, particularly KLIEP and LSIF that are prone to poor results due to high variance.}
\label{tab:acc_parity}
\end{table*}

\subsubsection{Ratio estimated via $\funcnew_w(g(\rmX))$}
\label{subsec:ratio_f_g_x}
\begin{figure}[t]
    \centering
    \includegraphics[width=0.42\textwidth]{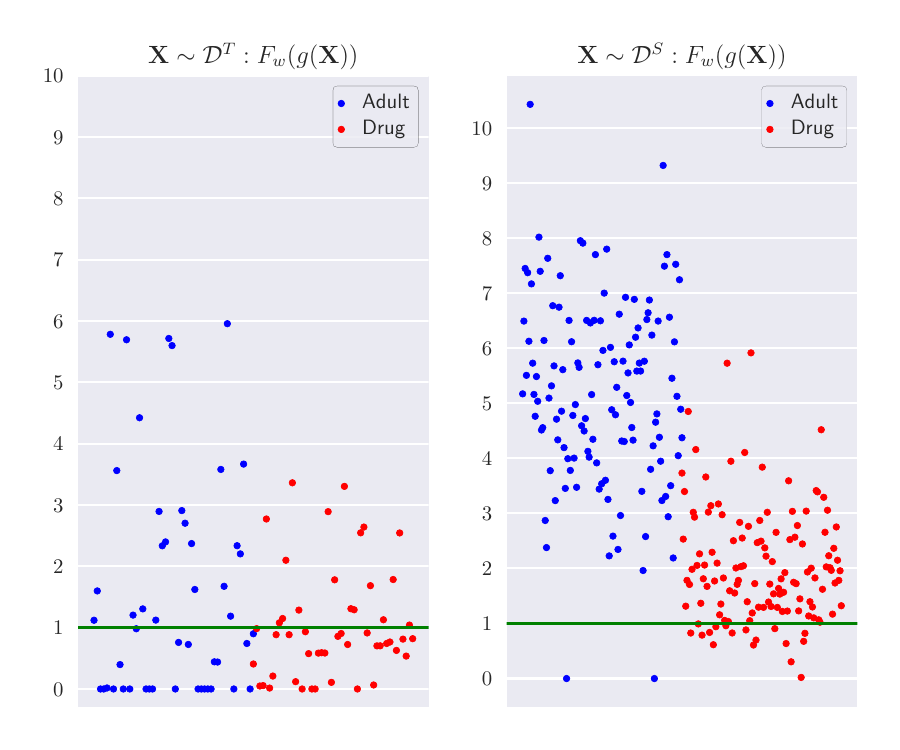}
    \caption{Comparison of the ratio estimated via $\funcnew_w(g(\rmX))$ across $\TE$ and $\TR$. The network learns the importance ratios w.r.t  $\PT$ and $\PS$.}
    \label{fig:weight_plots}

\end{figure}

We empirically justify the use of $\funcnew_w(g(\rmX))$ by comparing the distribution of the learned ratio across samples from $\TE$ and $\TR$ in figure~\ref{fig:weight_plots}. 

It is evident that the parametrized weight network can approximately learn importance ratios w.r.t  $\PT$ and $\PS$. The ratio computed for the test points lie mostly between $0$ and $1$ in order to satisfy $\mathcal{C}_1$ (in eq~\ref{eq:entropy_final_obj}) whereas the ratio computed for the training points are mostly $> 1$ in order to satisfy $\mathcal{C}_2$ (in eq~\ref{eq:entropy_final_obj}). More importantly, $\funcnew_w$ is learned end to end via optimization and doesn't incur any significant overhead compared to explicit density estimation.

\subsection{Proofs}
\label{subsec:gen_bound}

We recall some generalization bounds and definitions for finite hypothesis classes with bounded risks, that will will be used to prove theorems \ref{thm:IS_main} and \ref{thm:WE_main}.

For finite hypothesis classes, we recall generalization bounds for importance sampled  losses from
\cite{cortes2010learning}.

\begin{theorem}[\cite{cortes2010learning}] \label{cortes_thm}
Suppose that a dataset ${\cal D}$ is sampled i.i.d from distribution $\mathbb{P}(\rmX)$, $f_{\theta}$ is uniformly bounded by $L$ over the domain of $\mathbb{P}$, and a fixed weighing function $w(x)$ is such that $\sup w(x) = M,~ \mathbb{E}_{x \sim \mathbb{P}} [w(x)^2] \leq \sigma^2$. Consider the loss function $\tilde{l}(x) = w(x)f_{\theta}(x)$. We denote  $\tilde{l}=w \circ f_{\theta}$. then with probability $1-\delta$ over the draw of ${\cal D}$, $\forall \tilde{l} \in \{ w \circ f_{\theta}, ~{f_{\theta} \in \mathcal{F}_{\Theta}} \}$ we have:
\begin{align}
    \mathbb{E}_{x \sim \mathbb{P}} [\tilde{l}(x)] & \leq \sum \limits_{x \in {\cal D}} \tilde{l}(x) + \frac{2 M  (\log \lvert \Theta \rvert+ \log(1/\delta)) }{3 \mathcal{D}} + \nonumber \\ & L \sqrt{2\sigma^2 \frac{(\log \lvert \mathcal{F}_{\Theta} \rvert +  \log (1/\delta)) }{\lvert {\cal D}\rvert} }
\end{align}
\end{theorem}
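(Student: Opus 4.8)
The plan is to reduce the statement to a single scalar concentration inequality (Bernstein) for a fixed hypothesis and then union-bound over the finite class $\mathcal{F}_\Theta$. First I would fix $f_\theta \in \mathcal{F}_\Theta$ and consider the i.i.d.\ random variables $Z_i := \tilde l(x_i) = w(x_i) f_\theta(x_i)$, $x_i \in \mathcal{D}$, whose common mean is exactly $\mathbb{E}_{x\sim\mathbb{P}}[\tilde l(x)]$. From $0 \le f_\theta \le L$ and $0 \le w \le M$ on the support of $\mathbb{P}$ I get the range bound $0 \le Z_i \le ML$, and, crucially, from the second-moment control on the weights,
\begin{align}
\operatorname{Var}(Z_i) \le \mathbb{E}[Z_i^2] = \mathbb{E}_{x\sim\mathbb{P}}\big[w(x)^2 f_\theta(x)^2\big] \le L^2\, \mathbb{E}_{x\sim\mathbb{P}}[w(x)^2] \le L^2 \sigma^2 .
\end{align}

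Next I would apply the one-sided Bernstein inequality to the deviation $\mathbb{E}[Z] - \tfrac1N \sum_i Z_i$ (with $N = |\mathcal{D}|$), using range bound $ML$ and variance bound $L^2\sigma^2$: for any $\delta' \in (0,1)$, with probability at least $1-\delta'$,
\begin{align}
\mathbb{E}_{x\sim\mathbb{P}}[\tilde l(x)] \le \frac1N \sum_{x_i \in \mathcal{D}} \tilde l(x_i) + L\sqrt{\frac{2\sigma^2 \log(1/\delta')}{N}} + \frac{2M \log(1/\delta')}{3N},
\end{align}
which is the single-hypothesis form of the claim (the leading factor $L$ in the linear term being absorbed under the $[0,1]$ normalization used here). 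Since $\mathcal{F}_\Theta$ is finite with $|\mathcal{F}_\Theta| = |\Theta|$, I would set $\delta' = \delta/|\Theta|$ and union-bound this event over all $f_\theta \in \mathcal{F}_\Theta$; then $\log(1/\delta') = \log|\Theta| + \log(1/\delta)$, and the inequality holds simultaneously for every $\tilde l = w \circ f_\theta$, which is precisely the stated bound.

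I do not expect a genuine obstacle here --- the result is essentially ``Bernstein plus a union bound'' --- but the one step that must not be fumbled is the variance estimate above: $\operatorname{Var}(\tilde l)$ has to be controlled through the \emph{second moment} $\mathbb{E}[w^2] \le \sigma^2$ of the importance weights rather than through their supremum $M$, because it is exactly this that makes the leading $\sqrt{\cdot / N}$ term scale with $\sigma$ instead of $M$ --- the property later exploited to contrast $\mathcal{R}_{IS}$ with $\mathcal{R}_{WE}$. The only other care-points are invoking the correct upper-tail (one-sided) form of Bernstein's inequality and keeping the $1/N$ normalization of the empirical average consistent between the theorem statement and its subsequent uses.
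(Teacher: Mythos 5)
Your proof is correct, but note that the paper does not actually prove this statement: it is quoted as an external result from \cite{cortes2010learning} and invoked as a black box only to obtain Theorem~\ref{thm:IS_main}. Your argument---fix $f_\theta$, bound the range of $w f_\theta$ by $ML$ and its variance by $L^2\sigma^2$ via the second moment of the weights, apply one-sided Bernstein, then union bound with $\delta'=\delta/|\Theta|$---is precisely the standard derivation underlying the cited bound, and you sensibly handle the two cosmetic defects in the statement as printed (the missing $1/|\mathcal{D}|$ normalization of the empirical sum and the factor $L$ absorbed in the linear term, harmless under the $[0,1]$ loss normalization of Assumption~\ref{assump}). The only hypothesis worth stating explicitly is nonnegativity of $w f_\theta$ (true here, since the losses are nonnegative), which is what justifies using $ML$ as the one-sided range in the lower-tail Bernstein step.
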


We restate theorem \ref{thm:IS_main} here:%
\begin{theorem}\label{thm:IS_appendix}
Under Assumption \ref{assump}, with probability $1- \delta$ over the draws of $\TR \sim \PS$,  we have $\forall \theta \in \Theta$
\begin{align}
    \mathbb{E}_{\PS}[\mathcal{R}_{IS}(\theta)] \leq & \widehat{\mathcal{R}}_{IS}(\theta) + \frac{2 M (\log \lvert \Theta \rvert + \log (1/\delta)) }{3 \lvert \TR \rvert} + \nonumber \\
  &  \sqrt{2\sigma^2 \frac{(\log \lvert \Theta \rvert+  \log  (1/\delta)) }{ \lvert \TR \rvert} }
\end{align}
\end{theorem}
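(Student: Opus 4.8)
The plan is to obtain Theorem~\ref{thm:IS_appendix} as a direct specialization of the importance-weighted generalization bound of \cite{cortes2010learning} recorded in Theorem~\ref{cortes_thm}. First I would make the two objects in the statement explicit: writing $l_1(\rmX,\rmY;\theta) = -\log P_\theta(\preds=\rmY\mid\rmX)$ and $z(\rmX) = \PT(\rmX)/\PS(\rmX)$, the population and empirical importance-sampled risks are
\begin{align}
\E_{\PS}[\mathcal{R}_{IS}(\theta)] = \E_{(\rmX,\rmY)\sim\PS}\!\left[ z(\rmX)\, l_1(\rmX,\rmY;\theta)\right], \qquad \widehat{\mathcal{R}}_{IS}(\theta) = \frac{1}{\lvert\TR\rvert}\sum_{(\rmX_i,\rmY_i,\ermA_i)\in\TR} z(\rmX_i)\, l_1(\rmX_i,\rmY_i;\theta).
\end{align}
With this in hand the claim is exactly an instance of Theorem~\ref{cortes_thm} with $\mathbb{P}=\PS$, weighing function $w(\cdot)=z(\cdot)$, and base predictor $f_\theta(\cdot)=l_1(\cdot\,;\theta)$, so that $\tilde l = w\circ f_\theta$ and $\sum_x \tilde l(x)$ normalized by $\lvert\TR\rvert$ is $\widehat{\mathcal{R}}_{IS}(\theta)$.

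Next I would verify the hypotheses of Theorem~\ref{cortes_thm}. The function $z$ is a \emph{fixed} (deterministic) function of $\rmX$, which is permitted because Assumption~\ref{assump} grants access to the exact importance weights; by the same assumption $\sup_{\rmX\in\Xcal} z(\rmX)=M$. Since $l_1$ takes values in $[0,1]$ by Assumption~\ref{assump}, $f_\theta$ is uniformly bounded with constant $L=1$, and the class $\{f_\theta : \theta\in\Theta\}$ is finite with at most $\lvert\Theta\rvert$ elements. For the moment parameter, note that because $z$ is a density ratio we have $\E_{\PS}[z(\rmX)]=1$, whence $\E_{\PS}[z(\rmX)^2] = \Var_{\PS}(z(\rmX)) + 1$; this uncentered second moment is what plays the role of $\sigma^2$ in the bound (Assumption~\ref{assump} names $\sigma^2$ the ``variance of the weights'', the additive constant being lower order and absorbed into the notation), and since $l_1\in[0,1]$ we also have $\E_{\PS}[\tilde l(\rmX)^2] \le \E_{\PS}[z(\rmX)^2] \le \sigma^2$, which is all Theorem~\ref{cortes_thm} requires.

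Then I would simply invoke Theorem~\ref{cortes_thm}: with probability at least $1-\delta$ over the draw $\TR\sim\PS$, simultaneously for every $\theta\in\Theta$,
\begin{align}
\E_{\PS}[\mathcal{R}_{IS}(\theta)] \;\le\; \widehat{\mathcal{R}}_{IS}(\theta) + \frac{2M\,(\log\lvert\Theta\rvert + \log(1/\delta))}{3\,\lvert\TR\rvert} + L\sqrt{\,2\sigma^2\,\frac{\log\lvert\Theta\rvert + \log(1/\delta)}{\lvert\TR\rvert}\,},
\end{align}
and substituting $L=1$ yields precisely the inequality in the statement. No extra union bound is needed: the uniformity over $\Theta$ and the resulting $\log\lvert\Theta\rvert$ factor are already internal to Theorem~\ref{cortes_thm}.

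I do not expect a substantive obstacle here — the result is essentially a corollary of Theorem~\ref{cortes_thm}. The only care points are bookkeeping ones: (i) checking that $z$ legitimately counts as a fixed weighing function, which relies on the ``exact weights'' clause of Assumption~\ref{assump}; (ii) the mild slack in identifying the variance bound $\sigma^2$ with the second-moment quantity Theorem~\ref{cortes_thm} actually uses, which is harmless because $\E_{\PS}[z]=1$ (and could be removed entirely by invoking the underlying Bernstein/Bennett inequality directly with $\Var_{\PS}(\tilde l)$ in place of $\E_{\PS}[\tilde l^2]$); and (iii) the substitution $L=1$ coming from $l_1\in[0,1]$.
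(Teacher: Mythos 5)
Your proposal matches the paper's proof, which is precisely a direct application of Theorem~\ref{cortes_thm} to $\mathcal{R}_{IS}$ under Assumption~\ref{assump} with $w=z$, $f_\theta=l_1$, $L=1$, $\sup z = M$, and $\sigma^2$ as the (second-moment/variance) bound on the weights; you have simply spelled out the instantiation the paper leaves implicit. Your bookkeeping remarks (fixed weights, the variance vs.\ second-moment identification, $L=1$) are sound and raise no gap.
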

\begin{proof}
 The proof is a direct application of Theorem \ref{cortes_thm} to $R_{IS}(\theta)$ under Assumption \ref{assump}.
\end{proof}

\begin{definition}
Rademacher complexity ${\mathbf R}(A)$ for a finite set $A =\{a_1,a_2 \ldots a_N\} \subset \mathbb{R}^n$ is given by: 
  \begin{align}
      {\mathbf R}(A) = \mathbb{E}_{\sigma} [\sup \limits_{a \in A} \sum \limits_{i} \sigma_i a[i]  ]
  \end{align}
 where $\sigma$ is a sequence of $n$ i.i.d Rademacher variables each uniformly sampled from $\{-1,-1\}$ and $a[i]$ is the $i$-th coordinate of vector $a$.
\end{definition}

\textit{Empirical Rademacher complexity} of a class of finite number of functions $\mathcal{F}_{\Theta} = \{\funcnew_{\theta} | \theta \in \Theta \}$ on a data set ${\cal D}$ with $m$ samples is given by $\mathbf{R}(\mathcal{F}_{\Theta}({\cal D}))$ where $\mathcal{F}_{\Theta}({\cal D}) = \{ \mathrm{vec}(f_{\theta}(x),~\forall x \in D), ~ \forall \theta \in \Theta \} $.
Here, $\mathrm{vec}(\cdot)$ is a vector of entries.

\begin{theorem}[\cite{bousquet2003introduction}] $\mathbf{R}(\mathcal{F}_{\Theta}({\cal D})) \leq \left( \sup \limits_{f_{\theta} \in \mathcal{F}_{\Theta}, x \in {\cal D}} \lvert f_{\theta}(x) \rvert \right)  \sqrt{\frac{2 \log \lvert \mathcal{F}_{\Theta} \rvert}{ \lvert {\cal D} \rvert } }$.
\end{theorem}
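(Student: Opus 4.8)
This is the classical Massart finite-class bound on Rademacher complexity, specialized to the vector family $\mathcal{F}_{\Theta}(\mathcal{D}) \subset \mathbb{R}^{\lvert \mathcal{D}\rvert}$. The plan is to first establish the generic fact that for any finite $A = \{a_1,\dots,a_N\} \subset \mathbb{R}^n$ one has $\mathbf{R}(A) \le \big(\max_{a\in A}\lVert a\rVert_2\big)\sqrt{2\log N}$, and then to specialize: $\mathcal{F}_{\Theta}(\mathcal{D})$ has at most $\lvert\mathcal{F}_{\Theta}\rvert$ distinct elements, it lives in $\mathbb{R}^{\lvert\mathcal{D}\rvert}$, and each coordinate of each such vector is bounded in absolute value by $B := \sup_{f_\theta\in\mathcal{F}_\Theta,\,x\in\mathcal{D}}\lvert f_\theta(x)\rvert$, so that $\max_{a}\lVert a\rVert_2 \le B\sqrt{\lvert\mathcal{D}\rvert}$.

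For the generic bound I would run the standard exponential-moment (Chernoff/MGF) argument. Fixing $\lambda>0$, by Jensen's inequality (convexity of $\exp$) and then bounding the maximum over $A$ by the sum over $A$,
\begin{align*}
\exp\!\big(\lambda\,\mathbf{R}(A)\big) \;\le\; \mathbb{E}_{\sigma}\Big[\exp\big(\lambda\sup_{a\in A}\langle\sigma,a\rangle\big)\Big] \;\le\; \sum_{a\in A}\mathbb{E}_{\sigma}\big[e^{\lambda\langle\sigma,a\rangle}\big].
\end{align*}
Using independence of the Rademacher coordinates together with $\mathbb{E}[e^{\lambda\sigma_i t}] = \cosh(\lambda t) \le e^{\lambda^2 t^2/2}$, each summand is at most $e^{\lambda^2\lVert a\rVert_2^2/2} \le e^{\lambda^2 r^2/2}$ with $r := \max_{a\in A}\lVert a\rVert_2$. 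Hence $\lambda\,\mathbf{R}(A) \le \log N + \lambda^2 r^2/2$, and optimizing the right-hand side over $\lambda$ (take $\lambda = \sqrt{2\log N}/r$) gives $\mathbf{R}(A)\le r\sqrt{2\log N}$.

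Substituting $N \le \lvert\mathcal{F}_\Theta\rvert$ and $r \le B\sqrt{\lvert\mathcal{D}\rvert}$ then yields $\mathbf{R}(\mathcal{F}_\Theta(\mathcal{D})) \le B\sqrt{2\lvert\mathcal{D}\rvert\log\lvert\mathcal{F}_\Theta\rvert}$, and dividing through by $\lvert\mathcal{D}\rvert$ — the $1/\lvert\mathcal{D}\rvert$ normalization built into the empirical Rademacher complexity of the function class — recovers exactly $B\sqrt{2\log\lvert\mathcal{F}_\Theta\rvert/\lvert\mathcal{D}\rvert}$. There is no substantive obstacle here: all the content sits in the exponential-moment inequality and the one-line optimization over $\lambda$. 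The only point requiring care is the normalization bookkeeping, i.e. keeping track of whether $\mathbf{R}$ is defined with or without the $1/\lvert\mathcal{D}\rvert$ factor inside the supremum, so that $\lvert\mathcal{D}\rvert$ ends up in the denominator of the final square root rather than the numerator.
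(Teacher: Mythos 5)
Your proof is correct: it is the standard Massart finite-class argument (Jensen on the exponential, max-to-sum, the Rademacher MGF bound $\cosh(\lambda t)\le e^{\lambda^2 t^2/2}$, and optimization over $\lambda$), which is exactly the route taken in the cited source; the paper itself quotes this result from \cite{bousquet2003introduction} without proof, so there is nothing to diverge from. Your closing caveat about normalization is also well placed, since the paper's displayed definition of $\mathbf{R}(A)$ omits the $1/\lvert\mathcal{D}\rvert$ factor, and the stated bound with $\lvert\mathcal{D}\rvert$ in the denominator of the square root only holds under the normalized convention you adopt (otherwise one gets $B\sqrt{2\lvert\mathcal{D}\rvert\log\lvert\mathcal{F}_\Theta\rvert}$).
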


\begin{theorem}\label{thm:rad_bound}
[\cite{bousquet2003introduction}] For a dataset ${\cal D}$  sampled i.i.d from distribution $\mathbb{P}(\rmX)$ and $f_{\theta}$ is uniformly bounded by $L, \forall \theta \in \Theta$ over the domain of $\mathbb{P}$, then with probability $1-\delta$ over the draw of ${\cal D}$, we have $\forall f_{\theta} \in \mathcal{F}_{\Theta}$

\begin{align}
    \mathbb{E}_{x \sim \mathbb{P}}[ f_{\theta}(x) ] & \leq \frac{1}{\lvert {\cal D} \rvert}\sum \limits_{x \in {\cal D}} f_{\theta}(x) + 2 \mathbf{R}(\mathcal{F}_{\Theta}({\cal D})) + \nonumber \\ &3 L \sqrt{ \frac{\ln(2/\delta)}{2 \lvert {\cal D} \rvert}}  \nonumber \\
    \hfill & \leq \frac{1}{\lvert {\cal D} \rvert}\sum \limits_{x \in {\cal D}} f_{\theta}(x) + 2 L \sqrt{\frac{ 2\log \lvert \mathcal{F}_{\Theta} \rvert}{ \lvert {\cal D}\rvert} }  + \nonumber \\ &3 L \sqrt{ \frac{\ln(2/\delta)}{2 \lvert {\cal D} \rvert}}
\end{align}
\end{theorem}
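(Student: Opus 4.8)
The plan is to establish the first inequality by the classical symmetrization argument for uniform convergence, and to read off the second inequality by substituting the Massart-type bound already recorded in the preceding theorem. Write $n = \lvert{\cal D}\rvert$, set $\hat{\mathbb{E}}[f_\theta] = \tfrac1n\sum_{x\in{\cal D}}f_\theta(x)$, and define the uniform deviation $\Phi({\cal D}) = \sup_{\theta\in\Theta}\bigl(\mathbb{E}_{x\sim\mathbb{P}}[f_\theta(x)] - \hat{\mathbb{E}}[f_\theta]\bigr)$. Since $\mathbb{E}_{x\sim\mathbb{P}}[f_\theta(x)] \le \hat{\mathbb{E}}[f_\theta] + \Phi({\cal D})$ for every $\theta$, it suffices to show that, with probability at least $1-\delta$ over ${\cal D}$, $\Phi({\cal D}) \le 2\mathbf{R}(\mathcal{F}_\Theta({\cal D})) + 3L\sqrt{\ln(2/\delta)/(2n)}$.

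First I would check the bounded-differences property: replacing one sample in ${\cal D}$ changes $\hat{\mathbb{E}}[f_\theta]$ by at most $L/n$ uniformly in $\theta$ (using $0\le f_\theta\le L$), hence perturbs $\Phi$ by at most $L/n$; McDiarmid's inequality then yields $\Phi({\cal D}) \le \mathbb{E}_{{\cal D}}[\Phi({\cal D})] + L\sqrt{\ln(2/\delta)/(2n)}$ with probability at least $1-\delta/2$. Next I would bound $\mathbb{E}_{{\cal D}}[\Phi({\cal D})]$ by the ghost-sample trick: introduce an independent copy ${\cal D}'$, write $\mathbb{E}_{x\sim\mathbb{P}}[f_\theta] = \mathbb{E}_{{\cal D}'}\hat{\mathbb{E}}'[f_\theta]$, bring the supremum inside the expectation over ${\cal D}'$ by Jensen, and insert Rademacher signs $\sigma_i$ — valid because $f_\theta(x_i)-f_\theta(x_i')$ is symmetric in law. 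This gives the standard estimate $\mathbb{E}_{{\cal D}}[\Phi({\cal D})] \le 2\,\mathbb{E}_{{\cal D}}\bigl[\mathbf{R}(\mathcal{F}_\Theta({\cal D}))\bigr]$, i.e. twice the expected Rademacher complexity.

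It then remains to move from the expected to the empirical Rademacher complexity. As a function of the sample, $\mathbf{R}(\mathcal{F}_\Theta(\cdot))$ again has bounded differences of order $L/n$, so a second application of McDiarmid gives $\mathbb{E}_{{\cal D}}\bigl[\mathbf{R}(\mathcal{F}_\Theta({\cal D}))\bigr] \le \mathbf{R}(\mathcal{F}_\Theta({\cal D})) + L\sqrt{\ln(2/\delta)/(2n)}$ with probability at least $1-\delta/2$. A union bound over the two failure events and chaining of the three estimates delivers $\Phi({\cal D}) \le 2\mathbf{R}(\mathcal{F}_\Theta({\cal D})) + 3L\sqrt{\ln(2/\delta)/(2n)}$, which is the first inequality; the factor $3$ appears as $1+2$, one copy of $L\sqrt{\ln(2/\delta)/(2n)}$ from the first McDiarmid and two more from the second (doubled by the factor $2$ in front of $\mathbf{R}$). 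The second inequality is then immediate from the preceding theorem: $\mathbf{R}(\mathcal{F}_\Theta({\cal D})) \le \bigl(\sup_{\theta,\,x\in{\cal D}}\lvert f_\theta(x)\rvert\bigr)\sqrt{2\log\lvert\mathcal{F}_\Theta\rvert/n} \le L\sqrt{2\log\lvert\mathcal{F}_\Theta\rvert/n}$.

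The only genuinely delicate point — though it is entirely classical — is the symmetrization step: correctly introducing the ghost sample, justifying that the supremum may be taken inside the expectation, and verifying that the sign flips by the $\sigma_i$ leave the law of $\sup_\theta\sum_i\sigma_i\bigl(f_\theta(x_i)-f_\theta(x_i')\bigr)$ unchanged. Beyond that, the work is bookkeeping of constants — tracking which McDiarmid application contributes which $\sqrt{\ln(2/\delta)/(2n)}$ term and running each at confidence $1-\delta/2$ so that the union bound closes. Finiteness of $\Theta$ is used only at the very end, via Massart's finite-class lemma (the preceding theorem); every step before it holds verbatim for an arbitrary family $\mathcal{F}_\Theta$.
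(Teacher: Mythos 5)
Your argument is correct and is essentially the classical proof of this bound (ghost-sample symmetrization, two applications of McDiarmid each run at confidence $1-\delta/2$, and Massart's finite-class lemma for the second inequality); the paper itself offers no proof here, quoting the result from the cited reference, and your accounting of the constant $3 = 1 + 2$ matches the standard derivation. The one detail worth keeping straight is that the paper's displayed definition of $\mathbf{R}(A)$ omits the $1/\lvert {\cal D} \rvert$ normalization, so your proof, which works with the normalized empirical Rademacher complexity, uses the convention that is actually consistent with both inequalities as stated.
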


We restate theorem \ref{thm:WE_main} here:
\begin{theorem}
\label{thm:WE_appendix}
Under Assumption \ref{assump}, we have that with probability $1-2 \delta$ over the draws of $\TR \sim \PS$ and $\TE \sim \PT$, we have $\forall \theta \in \Theta$
\begin{align}
    \mathbb{E}_{\PS,\PT}[\mathcal{R}_{WE}(\theta)] \leq \widehat{\mathcal{R}}_{WE}(\theta) + 2  \sqrt{\frac{ 2\log \lvert \Theta \rvert}{ \lvert \TR \rvert} } + \nonumber \\
     2 \lambda \sqrt{\frac{ 2\log \lvert \Theta \rvert}{ \lvert \TE \rvert} }  +  3  \sqrt{ \frac{\ln(2/\delta)}{2 \lvert \TR \rvert}} +  3 \lambda \sqrt{ \frac{\ln(2/\delta)}{2 \lvert \TE \rvert}}
\end{align}
\end{theorem}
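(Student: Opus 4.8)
The plan is to decompose $\mathcal{R}_{WE}(\theta)$ into its two additive components and bound each by a uniform–convergence argument over the finite parameter set $\Theta$, then combine by a union bound over the two independent draws $\TR\sim\PS$ and $\TE\sim\PT$. Write $\mathcal{R}_{WE}(\theta) = \mathbb{E}_{\PS}\big[l_1(\theta)\big] + \lambda\,\mathbb{E}_{\PT}\big[\widetilde{l}_2(\theta)\big]$, where $l_1$ is the source cross-entropy loss of Assumption~\ref{assump}, $\widetilde{l}_2(\theta;\rmX) := e^{-\PS(\rmX)/\PT(\rmX)}\,l_2(\theta;\rmX)$ is the weighted-entropy integrand appearing in \eqref{eq:main_bound}, and $\widehat{\mathcal{R}}_{WE}(\theta)$ is correspondingly the sum of the empirical average of $l_1$ over $\TR$ and $\lambda$ times the empirical average of $\widetilde{l}_2$ over $\TE$.

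For the source term I would apply the finite-class Rademacher bound of Theorem~\ref{thm:rad_bound} to the family $\{\rmX\mapsto l_1(\theta;\rmX):\theta\in\Theta\}$: it has cardinality $|\Theta|$ and, by Assumption~\ref{assump}, takes values in $[0,1]$, so $L=1$, and the stated estimate gives $\mathbf{R}(\mathcal{F}_\Theta(\TR))\le\sqrt{2\log|\Theta|/|\TR|}$. This yields, with probability at least $1-\delta$ over $\TR$ and for all $\theta$, that $\mathbb{E}_{\PS}[l_1(\theta)]$ is at most its empirical average over $\TR$ plus $2\sqrt{2\log|\Theta|/|\TR|}+3\sqrt{\ln(2/\delta)/(2|\TR|)}$. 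For the target term the key preliminary observation is that $\widetilde{l}_2(\theta;\cdot)$ is again uniformly bounded in $[0,1]$, since $e^{-\PS/\PT}\in(0,1]$ and $l_2\in[0,1]$; crucially this boundedness is unconditional, i.e.\ it does not deteriorate with the importance-weight supremum $M$ or variance $\sigma^2$, which is exactly why those quantities never enter the bound (in contrast with Theorem~\ref{thm:IS_main}). Applying Theorem~\ref{thm:rad_bound} once more, now to $\{\rmX\mapsto\widetilde{l}_2(\theta;\rmX):\theta\in\Theta\}$ (cardinality $|\Theta|$, $L=1$) on $\TE$, gives with probability at least $1-\delta$ over $\TE$ and for all $\theta$ that $\mathbb{E}_{\PT}[\widetilde{l}_2(\theta)]$ is at most its empirical average over $\TE$ plus $2\sqrt{2\log|\Theta|/|\TE|}+3\sqrt{\ln(2/\delta)/(2|\TE|)}$.

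Finally I would scale the second inequality by $\lambda$, add it to the first, and take a union bound so that both hold simultaneously with probability at least $1-2\delta$; recognizing the two empirical averages as the two halves of $\widehat{\mathcal{R}}_{WE}(\theta)$ produces exactly the claimed inequality. The two invocations of Theorem~\ref{thm:rad_bound} are entirely routine and I do not expect a genuine obstacle; the only points needing care are verifying the uniform $[0,1]$-boundedness of $\widetilde{l}_2$ and correctly instantiating the finite-class cardinality $|\mathcal{F}_\Theta|=|\Theta|$. After that, the absence of any $M$- or $\sigma^2$-dependent term — the comparison with $\mathcal{R}_{IS}$ the paper wants to draw — is immediate, the whole content of the result being in the modeling choice of the $e^{-\PS/\PT}$ weighting rather than in the analysis.
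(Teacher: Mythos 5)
Your proposal is correct and follows essentially the same route as the paper: apply the finite-class Rademacher bound (Theorem~\ref{thm:rad_bound}) separately to $l_1$ on $\TR$ and to the $[0,1]$-bounded weighted-entropy integrand on $\TE$, scale by $\lambda$, and union bound to get the $1-2\delta$ statement. No gaps.
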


\begin{proof}
 We apply Theorem \ref{thm:rad_bound} to $\ell_1(\cdot)$ (which is bounded by $1$) and  $e^{-\tilde{w}(\cdot)}\ell_2(\cdot)$ where $\ell_2 (\cdot) \leq 1,~ e^{- \tilde{w}(\cdot)} \leq 1$ with the appropriate datasets in Assumption \ref{assump}. We then use union bound over the two error events that result from application of the theorem twice.
\end{proof}

\textbf{Key Takeaways:} As previously noted, by comparing Theorem \ref{thm:IS_appendix} and Theorem \ref{thm:WE_appendix}, we see that the generalization bound for importance sampled objective, $\mathcal{R}_{IS}$, depends on variance of importance weights $\sigma^2$ and also the worst case value $M$. In contrast, our objective, $\mathcal{R}_{WE}$, \textit{does not} depend on these parameters and thus does suffer from high variance. \\
Further, we note that $\mathcal{R}_{WE}$ depends on size of test set also the other apparently does not seem to. However, $\mathcal{R}_{IS}$ needs to estimate importance ratios - which will depend on the test set . As highlighted in the assumption \ref{assump}, we have analyzed both losses when the importance ratios are assumed to be known just to bring out the difference in dependencies on other parameters.

\textbf{Remark:} In Assumption \ref{assump}, we have assumed a finite hypothesis class $\Theta$. However, our result for Theorem \ref{thm:WE_appendix} would generalize (as is) with rademacher complexity or covering number based arguments of infinite functions classes. %
\cite{cortes2010learning} also point out analogous generalization for the importance sampling loss.  

\subsection{Representation Matching and Accuracy Parity}

We first quote a result from existing literature that bounds accuracy parity under representation matching.

\begin{theorem}[\cite{zhao2019inherent}]\label{thm:accparitybound}
Consider any soft classifier $\funcnew=h \circ g (\rmX) 
\in [0,1]$ and the hard decision rule $\preds = \mathbf{1}_{\funcnew(\rmX)>1/2}$. Let the Bayes optimal classifier for group $a$ under representation $g(\cdot)$ be: $\mathbf{1}_{\PT(\rmY=1|g(\rmX),A=a) > 1/2} =s_a(\rmX)$. Let the Bayes error for group $a$ under representation $g(\cdot)$ be $\mathrm{err}_a$.  Then we have:
\begin{align*}
 \Apar \leq & \overline{\mathbf{W}} + \lVert \PT(g(\rmX)| A=1) -  \PT(g(\rmX)| A=0) \rVert_1 \nonumber \nonumber \\  &+ \min \limits_a ( \mathbb{E}_{\PT(\rmX|a)}|s_1(\rmX)-s_0(\rmX)|)
\end{align*}
Here, $\overline{\mathbf{W}} = \sum \limits_a \mathrm{err}_a$ and  $\lVert \mathbb{P}(\cdot)- \mathbb{Q}(\cdot) \lVert_1$ is the total variation distance between measures $\mathbb{P}$ and $\mathbb{Q}$.
\end{theorem}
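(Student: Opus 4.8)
The plan is to run the classical ``triangle inequality for the $0/1$ loss'' argument from the domain-adaptation literature, but carried out in the representation space. Write $\varepsilon_a := \PT(\preds \neq \rmY \mid A=a)$ for the error of the hard rule $\preds = \mathbf{1}_{\funcnew(\rmX)>1/2}$ restricted to group $a$, so that $\Apar = |\varepsilon_0 - \varepsilon_1|$. The structural assumption $\funcnew = h\circ g$ is what makes everything go through: both $\preds$ and each group-optimal rule $s_a$ are measurable functions of $g(\rmX)$ (for $s_a$, of $g(\rmX)$ together with $a$), so all relevant quantities can be rewritten as expectations under the pushforward laws $\PT(g(\rmX)\mid A=a)$. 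The key auxiliary quantity is the disagreement of our classifier with the group-$a$ Bayes rule, $d_a := \PT(\preds \neq s_a \mid A=a)$.

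First I would localize each group's error around its Bayes rule. On $\{0,1\}$-valued random variables the map $\rho_a(U,V) := \PT(U\neq V\mid A=a)$ is a pseudometric, so applying the triangle inequality in both directions among $\preds$, $\rmY$, $s_a$ gives $|\varepsilon_a - d_a| \le \rho_a(s_a,\rmY) = \mathrm{err}_a$; hence $\Apar \le |d_0 - d_1| + \mathrm{err}_0 + \mathrm{err}_1 = |d_0-d_1| + \overline{\mathbf{W}}$, and it remains to bound $|d_0-d_1|$. Writing $d_a = \mathbb{E}_{z\sim \PT(g(\rmX)\mid a)}\big[\mathbf{1}(h(z)\neq \tilde s_a(z))\big]$ with $\tilde s_a(z) = \mathbf{1}_{\PT(\rmY=1\mid z, A=a) > 1/2}$, I would insert the mixed term $\mathbb{E}_{z\sim \PT(g(\rmX)\mid 1)}[\mathbf{1}(h(z)\neq \tilde s_0(z))]$. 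The gap between this and $d_0$ is a difference of expectations of a single $[0,1]$-valued function under the two pushforward laws, hence at most $\lVert \PT(g(\rmX)\mid A=1) - \PT(g(\rmX)\mid A=0)\rVert_1$; the gap between it and $d_1$ is an expectation under $\PT(g(\rmX)\mid 1)$ of $\mathbf{1}(h\neq \tilde s_0) - \mathbf{1}(h\neq \tilde s_1)$, pointwise bounded by $\mathbf{1}(\tilde s_0\neq \tilde s_1)$, so its expectation is $\mathbb{E}_{\PT(\rmX\mid 1)}|s_1(\rmX)-s_0(\rmX)|$. Performing the symmetric insertion (reference rule $\tilde s_1$ evaluated at the group-$0$ law) gives the same estimate with $\mathbb{E}_{\PT(\rmX\mid 0)}|s_1(\rmX)-s_0(\rmX)|$, and keeping the smaller of the two yields $|d_0-d_1| \le \lVert \PT(g(\rmX)\mid A=1)-\PT(g(\rmX)\mid A=0)\rVert_1 + \min_a \mathbb{E}_{\PT(\rmX\mid a)}|s_1(\rmX)-s_0(\rmX)|$. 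Combined with the previous estimate this is exactly the claimed inequality.

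I do not expect a genuine obstacle here, but there are three points to be careful about: (i) fixing the total-variation normalization once and for all so the change-of-measure step produces precisely the $\lVert\cdot\rVert_1$ of the statement (a factor $\tfrac12$ is available via the mean-zero trick $\int f(p-q) = \int(f-\tfrac12)(p-q)$ but is not needed); (ii) being explicit that $\preds$ and each $s_a$ factor through $g(\rmX)$ — this is the only use of the representation hypothesis and is what lets the total-variation term live in representation space rather than on $\Xcal$; and (iii) obtaining the $\min_a$ rather than a fixed index, which just requires carrying out the telescoping insertion in both of the two possible orders. Everything else is routine bookkeeping with indicator functions.
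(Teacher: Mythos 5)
Your proposal is correct, and it is essentially the canonical argument: the paper itself gives no proof of this statement (it is quoted from \cite{zhao2019inherent}), and your route --- the triangle inequality for the $0/1$ disagreement pseudometric to reduce $\Apar$ to $|d_0-d_1|$ plus the two group Bayes errors, then a change of measure in representation space giving the total-variation term and a cross-group comparison of $\tilde s_0,\tilde s_1$ giving $\mathbb{E}|s_1-s_0|$, with the $\min_a$ obtained by performing the insertion in both orders --- is the same decomposition used in that source. The only cosmetic points are that the comparison in $d_a$ should be between the thresholded prediction $\mathbf{1}_{h(z)>1/2}$ and $\tilde s_a(z)$ (not $h(z)$ itself), and, as you note, the $L_1$ versus $\tfrac12$-normalized total-variation convention is immaterial since the change-of-measure step is bounded by either.
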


This suggests applying a loss for representation matching to enforce accuracy parity as it would drive the purely label independent middle term to zero.
However, we argue that, under \textit{asymmetric covariate shift} (Definition \ref{defn:asymmcov_shift}), accuracy parity is approximately the third term in Theorem \ref{thm:accparitybound} which is also large, even when the second term is set to $0$.

 Consider the covariate shift scenario given by Definition \ref{defn:asymmcov_shift}. Suppose one is also able to find a representation $g(\cdot)$ that matches across groups exactly in the test, i.e. $\PT(g(\cdot) | A=1)= \PT (g(\cdot) | A=0)$. %
Due to the asymmetric covariate shift assumption between train and test, we have $\PT(g(\cdot) | A=0) = \PT(g(\cdot) | A=1)= \PS (g(\cdot) | A=0)$. %
Since there is no covariate shift for group $A=0$, optimal scoring function $s_0(\rmX)$ remains the same even for the training set, given the representation.

Since a classifier $h$ is learnt on top of representation $g$, and only training distribution of group $A=0$ under $g$ overlaps (completely) with the test, classifier $h$ would be trained overwhelmingly with the correct labels for $A=0$ in the region where test samples are found. Over the test distribution, the hard decision score function will be approximately $s_0(\rmX)$. Therefore, the error in the group $0$ would be small. While the test error in group $1$ will be approximately $\mathbb{E}_{\PT(\cdot | A=1)} \left( |s_0(\rmX) - s_1(\rmX)| \right)$ which matches the third term in Theorem \ref{thm:accparitybound} and this third term is also large. %

\section{Limitations and Future Work}
Our work deals primarily with covariate shift where the label conditioned on covariates does not shift. In real world datasets, label shift often accompanies covariate shift. Our objectives were designed only to tackle covariate shift while it would be an interesting an future direction to extend this work to incorporate label shift also.

\end{document}